\documentclass{article}

\usepackage[final, nonatbib]{neurips_2020}

\usepackage[utf8]{inputenc} %
\usepackage[T1]{fontenc}    %
\usepackage{hyperref}       %
\usepackage{url}            %
\usepackage{booktabs}       %
\usepackage{amsfonts}       %
\usepackage{nicefrac}       %
\usepackage{microtype}      %

\usepackage{amsmath,amsthm,amsfonts,amssymb,amscd}
\usepackage{physics}
\usepackage[backend=biber,style=alphabetic, maxbibnames=99]{biblatex}
\usepackage{graphicx}
\usepackage{mymacros}
\usepackage{enumitem}
\usepackage{appendix}
\setlist[itemize]{leftmargin=*,itemsep=0pt}

\addbibresource{references.bib}

\newcommand{\tagaligneq}{\refstepcounter{equation}\tag{\theequation}}

\title{Continuous Regularized Wasserstein Barycenters}

\author{%
  Lingxiao Li \\
  MIT CSAIL \\
  \texttt{lingxiao@mit.edu} \\
  \And
  Aude Genevay \\
  MIT CSAIL \\
  \texttt{aude.genevay@gmail.com} \\
  \AND
  Mikhail Yurochkin \\
  IBM Research, MIT-IBM Watson AI Lab \\
  \texttt{mikhail.yurochkin@ibm.com} \\
  \And
  Justin Solomon \\
  MIT CSAIL, MIT-IBM Watson AI Lab \\
  \texttt{jsolomon@mit.edu} \\
}

\begin{document}

\maketitle

\begin{abstract}
Wasserstein barycenters provide a geometrically meaningful way to aggregate probability distributions, built on the theory of optimal transport. They are difficult to compute in practice, however, leading previous work to restrict their supports to finite sets of points. Leveraging a new dual formulation for the regularized Wasserstein barycenter problem, we introduce a stochastic algorithm that constructs a \emph{continuous} approximation of the barycenter. We establish strong duality and use the corresponding primal-dual relationship to parametrize the barycenter implicitly using the dual potentials of regularized transport problems. The resulting problem can be solved with stochastic gradient descent, which yields an efficient online algorithm to approximate the barycenter of continuous distributions given sample access. We demonstrate the effectiveness of our approach and compare against previous work on synthetic examples and real-world applications.
\end{abstract}

\section{Introduction}
In statistics and machine learning, it is often desirable to aggregate distinct but similar collections of information, represented as probability distributions. 
For example, when temperature data is missing from one weather station, one can combine the temperature histograms from nearby stations to provide a good estimate for the missing station~\cite{solomon2014wasserstein}.
Or, in a  Bayesian inference setting, when inference on the full data set is not allowed due to privacy or efficiency reasons, one can distributively gather posterior samples from slices of the data to form a single posterior incorporating information from all the data~\cite{minsker2014scalable, srivastava2018scalable, srivastava2015wasp, staib2017parallel}.

One successful aggregation strategy consists in computing a \emph{barycenter} of the input distributions.  
Given a notion of distance between distributions, the barycenter %
is the distribution that minimizes the sum of distances to the individual input distributions. 
A popular choice of distance is the \emph{Wasserstein distance} based on the theory of optimal transport. The corresponding barycenter, called the \emph{Wasserstein barycenter} was first studied in \cite{agueh2011barycenters}. %
Intuitively, the Wasserstein distance is defined as the least amount of work required to transport the mass from one distribution into the other, where the notion of work is measured with respect to the metric of the underlying space on which the distributions are supported.
The Wasserstein distance enjoys strong theoretical properties~\cite{villani2008optimal, fournier2015rate, santambrogio2015optimal}, and efficient algorithms for its computation have been proposed in recent years~\cite{cuturi2013sinkhorn, genevay2016stochastic, seguy2017large, peyre2019computational}.  
It has found success in many machine learning applications, including Bayesian inference ~\cite{el2012bayesian} and domain adaptation~\cite{courty2014domain}.

Finding the Wasserstein barycenter is not an easy task. 
To make it computationally tractable, the barycenter is typically constrained to be a discrete measure on a fixed number of support points~\cite{cuturi2014fast,staib2017parallel,dvurechenskii2018decentralize,claici2018stochastic}. 
This discrete approximation, however, can be undesirable in downstream applications, as it goes against the inherently continuous nature of many data distributions and lacks the capability of generating fresh samples when needed. To address this shortcoming, in this work we compute a continuous approximation of the barycenter that provides a stream of samples from the barycenter.

 \textbf{Contributions.} 
We propose a stochastic algorithm to approximate the Wasserstein barycenter without discretizing its support.
Our method relies on a novel dual formulation of the regularized Wasserstein barycenter problem where the regularization is applied on a \emph{continuous} support measure for the barycenter.
The dual potentials that solve this dual problem can be used to recover the optimal transport plan between each input distribution and the barycenter. 
We solve the dual problem using stochastic gradient descent, yielding an efficient algorithm that only requires sample access to the input distributions. The barycenter can then be extracted as a follow-up step. Compared to existing methods, our algorithm produces the first continuous approximation of the barycenter that allows sample access.
We demonstrate the effectiveness of our approach on synthesized examples and on real-world data for subset posterior aggregation. 

\textbf{Related Work.}
In~\cite{agueh2011barycenters}, the notion of Wasserstein barycenters was first introduced and analyzed theoretically. %
Although significant progress has been made in developing fast and scalable methods to compute the Wasserstein distance between distributions in both discrete~\cite{cuturi2013sinkhorn} and continuous cases~\cite{genevay2016stochastic, seguy2017large}, the search for an efficient and flexible Wasserstein barycenter algorithm has been overlooked in the continuous setting.

To have a tractable representation of the barycenter, previous methods assume that the barycenter is supported on discrete points. When the support is fixed \textit{a priori}, the problem boils down to estimating the weights of the support points, and efficient projection-based methods can be used for discrete input measures~\cite{benamou2015iterative,solomon2015convolutional, cuturi2016smoothed} while gradient-based solvers can be used for continuous input measures~\cite{staib2017parallel, dvurechenskii2018decentralize}. These fixed-support methods become prohibitive in higher dimensions, as the number of points required for a reasonable \textit{a priori} discrete support grows exponentially. 
When the support points are free to move, alternating optimization of the support weights and the support points is typically used to deal with the non-convexity of the problem~\cite{cuturi2014fast}. 
More recent methods use stochastic optimization~\cite{claici2018stochastic} or the Franke--Wolfe algorithm~\cite{luise2019sinkhorn} to construct the support iteratively.
These free-support methods, however, are computationally expensive and do not scale to a large number of support points.

If the support is no longer constrained to be discrete, a key challenge is to find a suitable representation of the now continuous barycenter, a challenge that is unaddressed in previous work. 
We draw inspiration from~\cite{genevay2016stochastic}, where the Wasserstein distance between continuous distributions is computed by parameterizing the dual potentials in a reproducing kernel Hilbert space (RKHS). 
Their work was followed by~\cite{seguy2017large}, where neural networks are used instead of RKHS parameterizations. 
The primal-dual relationship exhibits a bridge between continuous dual potentials and the transport plans, which can the be marginalized to get a convenient continuous representation of the distributions. 
However, a direct extension of \cite{seguy2017large} to the barycenter problem will need to parameterize the barycenter measure, resulting in an alternating min-max optimization.
By introducing a novel regularizing measure that does not rely on the unknown barycenter but only on a proxy support measure, we are able to encode the information of the barycenter in the dual potentials themselves without explicitly parameterizing the barycenter.
This idea of computing the barycenter from dual potentials can be viewed as a generalization of \cite{cuturi2016smoothed} to the continuous case where the barycenter is no longer supported on a finite set known beforehand.

\section{Background on Optimal Transport}\label{sec:background}
Throughout, we consider a compact set $\groundspace \subset \RR^d$ equipped with a symmetric cost function $c: \groundspace \times \groundspace \to \RR_+$. 
We denote by $\probspace$ the space of probability Radon measures.
For any $\mu, \nu \in \probspace$, the Kantorovich formulation of optimal transport between $\mu$ and $\nu$ is defined as:
\begin{equation}
    W(\mu,\nu) \defeq \inf_{\pi \in \Pi(\mu,\nu)} \int_{\groundspace^2} c(x,y)\,\dd\pi(x,y), \label{eqn:wd_primal}
\end{equation}
where $\Pi(\mu,\nu)\defeq \{\pi \in \prodprobspace \cond (P_x)_\#\pi = \mu, (P_y)_\#\pi = \nu \}$ is the set of admissable \emph{transport plans}, $P_x(x,y) \defeq x$ and $P_y(x,y) \defeq y$ are the projections onto the first and second coordinate respectively, and $T_\#(\alpha)$ denotes the pushforward of the measure $\alpha$ by a function $T$. When $c(x,y) = ||x-y||^p_2$, the quantity $W(\mu,\nu)^{1/p}$ is the \emph{$p$-Wasserstein} distance between $\mu$ and $\nu$.

The primal problem \eqref{eqn:wd_primal} admits an equivalent dual formulation~\cite{santambrogio2015optimal}:
\begin{equation}\label{eqn:wd_dual}
    W(\mu,\nu) = \sup_{\substack{f,g\in C(\groundspace) \\ f \oplus g \le c}} \int_{\groundspace} f(x)\,\dd\mu(x) +  \int_{\groundspace} g(y)\,\dd\nu(y),
\end{equation}
where $C(\groundspace)$ is the space of continuous real-valued functions on $\groundspace$, and $(f \oplus g)(x,y) \defeq f(x) + g(y)$.
The inequality $f\oplus g\le c$ is interpreted as $f(x) + g(y) \le c(x,y)$ for $\mu$-a.e.\ $x$ and $\nu$-a.e.\ $y$.
We refer to $f$ and $g$ as the \emph{dual potentials}.

Directly solving~\eqref{eqn:wd_primal} and~\eqref{eqn:wd_dual} is challenging even with discretization, since the resulting linear program can be large. 
Hence \emph{regularized} optimal transport has emerged as a popular, efficient alternative \cite{cuturi2013sinkhorn}. Let $\xi \in \prodprobspace$ be the measure on which we enforce a relaxed version of the constraint $f \oplus g \le c$ that we call the \emph{regularizing measure}.
In previous work, $\xi$ is usually taken to be the product measure $\mu \otimes \nu$ \cite{genevay2016stochastic} or the uniform measure on a discrete set of points~\cite{cuturi2013sinkhorn}.
Given a convex regularizer $R: \RR \to \RR$, we define the regularized version of (\ref{eqn:wd_primal}) with respect to $\xi, R$ as
\begin{equation}
    W_R^\xi(\mu,\nu) \defeq \inf_{\substack{\pi \in \Pi(\mu,\nu)\\ \pi \ll \xi}} \int_{\groundspace \times \groundspace} c(x,y)\,\dd\pi(x,y) + \int_{\groundspace \times \groundspace} R\left(\dv{\pi}{\xi} (x,y)\right)\,\dd \xi(x,y), \label{eqn:wd_regularized_primal}
\end{equation}
where $\pi \ll \xi$ denotes that $\pi$ is absolutely continuous with respect to $\xi$.
In this work, we consider entropic and quadratic regularization defined by
\begin{equation}
\forall t \ge 0, \quad R(t) \defeq \left\{\begin{array}{ll}
            \epsilon (t \ln t - t) & \text{entropic} \\
            \frac{\epsilon}{2}t^2 & \text{quadratic}.
    \end{array}\right.
    \label{eqn:regularizers_primal}
\end{equation}
As in the unregularized case, the primal problem \eqref{eqn:wd_regularized_primal} admits an equivalent dual formulation for entropic~\cite{genevay2016stochastic, clason2019entropic} and quadratic~\cite{lorenz2019quadratically} regularization:
\begin{equation}
    W_R^\xi(\mu,\nu) = \! \sup_{\substack{f,g\in C(\groundspace)}}  \int_{\groundspace} \! f(x)\,\dd\mu(x) +  \int_{\groundspace} \! g(y)\,\dd\nu(y)
    - \int_{\groundspace\times \groundspace} \!\!\!\!R^*\left(f(x)+g(y)-c(x,y)\right)\,\dd \xi(x,y),
    \label{eqn:wd_regularized_dual}
\end{equation}
where the regularizer $R^*$ on the dual problem is determined as
\begin{equation}
    \forall t \in \RR, \quad R^*(t) = \left\{\begin{array}{ll}
            \epsilon\exp(\frac t \epsilon) & \text{entropic} \\
            \frac{1}{2\epsilon}(t_+)^2 & \text{quadratic}.
    \end{array}\right.
    \label{eqn:regularizers_dual}
\end{equation}
The regularized dual problem has the advantage of being \emph{unconstrained} thanks to the penalization of $R^*$ to smoothly enforce $f\oplus g\leq c$. 
We can recover the optimal transport plan $\pi$ from the optimal dual potentials $(f,g)$ using the primal-dual relationship~\cite{genevay2016stochastic, lorenz2019quadratically}:
\begin{equation}\label{eqn:primal_dual}
    \dd\pi(x,y) = H(x,y)\dd\xi(x,y),\text{ where }
    H(x,y) = \left\{
        \begin{array}{cc}
            \exp\left(\frac{f(x)+g(y) - c(x,y)}{\epsilon}\right) & \text{entropic} \\
            \left(\frac{f(x)+g(y) - c(x,y)}{\epsilon}\right)_+ & \text{quadratic}.
        \end{array}
    \right.
\end{equation}
The entropic regularizer is more popular, as in the discrete case it yields a problem that can be solved with the celebrated Sinkhorn algorithm \cite{cuturi2013sinkhorn}.
We will consider both entropic and quadratic regularization in our setup, although more general regularizers can be used.

\section{Regularized Wasserstein Barycenters}

We can now use the regularized Wasserstein distance~\eqref{eqn:wd_regularized_primal} to define a regularized version of the classic Wasserstein barycenter problem introduced in~\cite{agueh2011barycenters}.
\subsection{Primal and dual formulation of the regularized Wasserstein barycenter}
Given input distributions $\mu_1,\ldots,\mu_n \in \probspace$ and weights $\lambda_1,\ldots,\lambda_n \in \RR_+$, the \emph{(unregularized) Wasserstein barycenter} problem\footnote{In some convention, there is an additional exponent in the summands of \eqref{eqn:wb_primal} such as $\sum_{i=1}^n \lambda_i W_2^2(\mu_i,\nu)$ for the 2-Wasserstein barycenter. Here we absorb such exponent in \eqref{eqn:wd_primal}, e.g., $W(\mu_i,\nu) = W_2^2(\mu_i,\nu)$.} of these input measures with respect to the weights is~\cite{agueh2011barycenters}:
\begin{equation}
    \inf_{\nu \in \mathcal{M}_1^+(\mathcal X)} \sum_{i=1}^n \lambda_i W(\mu_i, \nu).
    \label{eqn:wb_primal}
\end{equation}
Since this formulation is hard to solve in practice, we instead consider the following \emph{regularized Wasserstein barycenter} problem with respect to the regularized Wasserstein distance~\eqref{eqn:wd_regularized_primal} for some $\eta \in \probspace$, where $R$ refers to either quadratic or entropic regularization~\eqref{eqn:regularizers_primal}:
\begin{equation}
    \inf_{\nu \in \mathcal{M}_1^+(\mathcal X)} \sum_{i=1}^n \lambda_i W_R^{\mu_i \otimes \eta}(\mu_i, \nu).
    \label{eqn:wb_regularized_primal}
\end{equation}
If we knew the true barycenter $\nu$, we would set $\eta = \nu$. 
Hence for \eqref{eqn:wb_regularized_primal} to make sense \textit{a priori}, we must use another measure $\eta$ as a proxy for $\nu$. We call such $\eta$ the \emph{barycenter support measure}. 
If no information about the barycenter is known beforehand, we take $\eta = \unif(\groundspace)$, the uniform measure on $\groundspace$.
Otherwise we can choose $\eta$ based on the information we have.

Our method relies on the following dual formulation of~\eqref{eqn:wb_regularized_primal}: 
\begin{theorem}\label{thm:wb_duality}
    The dual problem of (\ref{eqn:wb_regularized_primal}) is
    \begin{equation}
        \sup_{\substack{\{(f_i,g_i)\}_{i=1}^n \subset C(\groundspace)^2 \\ \sum_{i=1}^n \lambda_i g_i = 0}}\sum_{i=1}^n \lambda_i \left( \int f_i \,\dd\mu_i - \iint R^*\left(f_i(x) + g_i(y) - c(x,y)\right)\,\dd\mu_i(x)\,\dd\eta(y)  \right).
        \label{eqn:wb_regularized_dual}
    \end{equation}
   Moreover, strong duality holds in the sense that the infimum of \eqref{eqn:wb_regularized_primal} equals the supremum of \eqref{eqn:wb_regularized_dual}, and a solution to \eqref{eqn:wb_regularized_primal} exists.
   If $\{(f_i, g_i)\}_{i=1}^n$ solves~\eqref{eqn:wb_regularized_dual}, then each $(f_i,g_i)$ is a solution to the dual formulation \eqref{eqn:wd_regularized_dual} of $W_R^{\mu_i\otimes\eta}(\mu_i,\nu)$.
\end{theorem}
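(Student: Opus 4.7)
The plan is the standard four-step strategy for barycenter duality: substitute each individual regularized OT dual~\eqref{eqn:wd_regularized_dual} into~\eqref{eqn:wb_regularized_primal}, interchange the inf over $\nu$ with the sup over the dual potentials, evaluate the resulting inner infimum over $\nu$, and then normalize to produce the constraint $\sum_i \lambda_i g_i = 0$.

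Concretely, dualizing term by term yields
\begin{equation*}
    \inf_{\nu\in\mathcal{M}_1^+(\mathcal X)}\sum_{i=1}^n \lambda_i \sup_{f_i,g_i\in C(\mathcal X)} L_i(\nu;f_i,g_i),
\end{equation*}
where $L_i(\nu;f_i,g_i)\defeq \int f_i\,\dd\mu_i + \int g_i\,\dd\nu - \iint R^*(f_i(x)+g_i(y)-c(x,y))\,\dd\mu_i(x)\,\dd\eta(y)$. Each $L_i$ is concave in $(f_i,g_i)$ by convexity of $R^*$ and affine in $\nu$, and $\mathcal{M}_1^+(\mathcal X)$ is weakly compact since $\mathcal X$ is compact. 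A minimax theorem (Sion, after bounding the potentials using the coercivity of $R^*$, or Fenchel--Rockafellar applied to the joint formulation over transport plans with coupled second marginals) then allows exchanging the inf and the sup. After the swap, the only $\nu$-dependence is through $\int(\sum_i\lambda_i g_i)\,\dd\nu$, whose infimum over probability measures on compact $\mathcal X$ equals $\min_y \sum_i\lambda_i g_i(y)$.

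To pass to the constraint $\sum_i\lambda_i g_i = 0$ I would use a shifting argument: with $h\defeq \sum_j\lambda_j g_j$, the substitution $(f_i,g_i)\mapsto(f_i+h,\,g_i-h)$ leaves each pointwise sum $f_i+g_i$ invariant (hence the $R^*$ integral unchanged), enforces $\sum_i\lambda_i g_i = 0$ (using $\sum_i\lambda_i=1$), and changes the objective by $\int h\,\dd(\sum_i\lambda_i\mu_i) - \min_y h(y)\ge 0$. Hence the sup may be restricted to $\sum_i\lambda_i g_i = 0$ without loss, and on that set the $\min_y$ term vanishes, giving exactly~\eqref{eqn:wb_regularized_dual}. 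Existence of a primal minimizer then follows from the direct method: $\nu\mapsto W_R^{\mu_i\otimes\eta}(\mu_i,\nu)$ is weakly lower semicontinuous (as a supremum of weakly continuous affine functionals of $\nu$ via~\eqref{eqn:wd_regularized_dual}), and $\mathcal{M}_1^+(\mathcal X)$ is weakly compact. For the final assertion, given an optimum $(\nu^*,\{(f_i^*,g_i^*)\})$, weak duality on each individual OT problem yields $W_R^{\mu_i\otimes\eta}(\mu_i,\nu^*)\ge \int f_i^*\,\dd\mu_i + \int g_i^*\,\dd\nu^* - \iint R^*(f_i^*+g_i^*-c)\,\dd\mu_i\,\dd\eta$; taking the $\lambda_i$-weighted sum, using $\sum_i\lambda_i g_i^* = 0$ so that the $\int g_i^*\,\dd\nu^*$ contributions cancel, and invoking strong duality for the barycenter problem forces equality termwise.

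The hard part will be a rigorous justification of the inf-sup swap on the infinite-dimensional space $C(\mathcal X)^{2n}$, where Sion's compactness hypothesis is not automatic. My preferred route is to bypass minimax altogether by applying Fenchel--Rockafellar duality to the joint primal $\inf_{(\pi_i)}\sum_i\lambda_i [\int c\,\dd\pi_i + \int R(\dd\pi_i/\dd(\mu_i\otimes\eta))\,\dd(\mu_i\otimes\eta)]$ under the linear constraints $(P_y)_\#\pi_i = (P_y)_\#\pi_j$; the Lagrange multipliers for these coupling constraints produce the potentials $g_i$ and automatically encode the null-sum relation $\sum_i\lambda_i g_i = 0$, while the $f_i$ arise from dualizing the marginal constraints $(P_x)_\#\pi_i = \mu_i$.
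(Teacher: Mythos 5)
Your normalization step is incorrect as written, and it is the one place where your outline makes a definite claim that fails. The substitution $(f_i,g_i)\mapsto(f_i+h,\,g_i-h)$ with $h=\sum_j\lambda_j g_j$ does \emph{not} leave the penalty term unchanged: the integrand is $R^*\bigl(f_i(x)+g_i(y)-c(x,y)\bigr)$ with $x$ and $y$ decoupled, so its argument changes by $h(x)-h(y)$, which is nonzero whenever $h$ is non-constant; ``$f_i+g_i$ invariant'' only holds at coincident points, which is not what appears in the objective. A correct reduction must use monotonicity of $R^*$ rather than invariance: for instance set $m=\min_y h(y)$ and replace $g_i\mapsto g_i-h$ and $f_i\mapsto f_i+m$. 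Then $\sum_i\lambda_i g_i=0$, each argument of $R^*$ decreases by $h(y)-m\ge 0$ so the penalty does not increase, and the gain of $m$ in the $\int f_i\,\dd\mu_i$ terms exactly compensates the disappearance of the $\min_y h$ term, so the objective does not decrease. (The paper performs the analogous step in mirror image, relaxing the constraint $\sum_i\lambda_i g_i=0$ to an inequality and again invoking that $R^*$ is increasing.)

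The inf--sup exchange, which you correctly identify as the crux, is also left at sketch level, and your worry is slightly misplaced: Sion's theorem needs compactness on only one side, and you can put it on the $\nu$ side, since $\mathcal{M}_1^+(\mathcal{X})$ is convex and weak-$*$ compact for compact $\mathcal{X}$, the objective is affine and weak-$*$ continuous in $\nu$ for fixed continuous potentials, and concave and sup-norm continuous in the potentials for fixed $\nu$; no a priori bound on the potentials is required. Combined with the cited single-pair duality \eqref{eqn:wd_regularized_dual} (which, like the paper, you must invoke for arbitrary nonnegative $\nu$, not only $\nu\ll\eta$) and your direct-method argument for existence of a minimizer $\nu^*$ of \eqref{eqn:wb_regularized_primal}, this would give a complete proof along a genuinely different route from the paper's. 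Your preferred alternative---Fenchel--Rockafellar applied to the joint plan problem with coupled second marginals---is more delicate than you suggest: the primal variables live in $\mathcal{M}(\mathcal{X}^2)$, where the usual constraint qualification (continuity of one functional at a common feasible point) is unavailable, and the standard remedy is to dualize from the function side, treating the potentials problem \eqref{eqn:wb_regularized_dual} as primal with a cone constraint in $C(\mathcal{X})$ and $\nu$ as the multiplier. That is exactly the paper's route via Ekeland--Temam: it checks a Slater point, continuity of the convex functional $J$, and finiteness of the constrained infimum, obtains strong duality and existence of $\nu^*$ as the optimal multiplier (plus a separate argument that the multiplier has unit mass), and pointedly does not claim existence of optimal potentials---your final assertion must likewise stay conditional on their existence, which it does; that last argument of yours (weak duality per term, cancellation from $\sum_i\lambda_i g_i=0$, strong duality forcing termwise equality) coincides with the paper's.
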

 We include a proof of Theorem~\ref{thm:wb_duality} in the supplementary document. The proof relies on the convex duality theory of locally convex topological spaces as in~\cite{ekeland1999convex}.
\begin{rem}
\label{rem:primal_dual}
 Based on Theorem~\ref{thm:wb_duality}, we can recover the optimal transport plan $\pi_i$ between $\mu_i$ and the barycenter $\nu$ from the pair $(f_i,g_i)$ solving~\eqref{eqn:wb_regularized_dual} via the primal-dual relationship~\eqref{eqn:primal_dual}.
\end{rem}

\subsection{Solving the regularized barycenter problem}
Notice that \eqref{eqn:wb_regularized_dual} is convex in the potentials $\{f_i, g_i\}_{i=1}^n$ with the linear constraint $\sum_{i=1}^n \lambda_i g_i = 0$.
To get an unconstrained version of the problem, we replace each $g_i$ with $g_i - \sum_{i=1}^n \lambda_i g_i$. Rewriting integrals as expectations, we obtain the following formulation equivalent to~\eqref{eqn:wb_regularized_dual}:
\begin{equation}
    \sup_{\substack{\{f_i\}_{i=1}^n \subset C(\groundspace) \\ \{g_i\}_{i=1}^n \subset C(\groundspace) }} 
    \EE_{\substack{X_i \sim \mu_i\\ Y \sim \eta}} \left[ \sum_{i=1}^n \lambda_i \left( f_i(X_i) - R^*\left(f_i(X_i) + g_i(Y) - \sum_{j=1}^n\lambda_j g_j(Y) - c(X_i,Y)\right)  \right) \right].\label{eqn:sgd_potentials}
\end{equation}
This version is an unconstrained concave maximization of an expectation. 

The optimization space of \eqref{eqn:sgd_potentials} is infinite-dimensional. Following \cite{genevay2016stochastic}, we parameterize the potentials $\{f_i, g_i\}_{i=1}^n$ and solve \eqref{eqn:sgd_potentials} using stochastic gradient descent (Algorithm~\ref{alg:sgd}). In their paper, the parameterization is done using reproducing kernel Hilbert spaces, which can be made more efficient using random Fourier features \cite{rahimi2008random}; this technique gives convergence guarantees but is only well-suited for smooth problems. In \cite{seguy2017large}, a neural network parameterization is used with the benefit of approximating arbitrary continuous functions, but its convergence guarantees are more elusive. We extend these techniques to solve~\eqref{eqn:sgd_potentials}. A comparison between neural network parameterization and random Fourier parameterization is included in Figure~\ref{fig:qualitative_param}.

Once we approximate the optimal potentials $\{f_i\}_{i=1}^n, \{g_i\}_{i=1}^n$, as observed in Remark~\ref{rem:primal_dual}, we can recover the corresponding transport plan $\pi_i$ via the primal-dual relationships \eqref{eqn:primal_dual}.

This formulation can be easily extended to the discrete case. 
If the barycenter has a fixed discrete support known \emph{a priori}, we can take $\eta$ to be the uniform measure on the discrete support and parameterize each $g_i$ as a real-valued vector.
If the input distributions are discrete, we can use an analogous discrete representation for each $f_i$. 

\begin{algorithm}[t]
  \caption{Stochastic gradient descent to solve the regularized barycenter problem~\eqref{eqn:sgd_potentials}}
  \label{alg:sgd}
  \SetKwInOut{Input}{Input}
  \Input{distributions $\mu_1, \dots,\mu_n$ with sample access, weights $(\lambda_1,\dots,\lambda_n)$, dual regularizer $R^*$, regularizing measure $\eta$, cost function $c$, gradient update function \texttt{ApplyGradient}. }
  
  \SetKwInOut{Output}{Output}
  \vspace{0.05in}
  Initialize parameterizations $\{(f_{\theta_i},g_{\phi_i})\}_{i=1}^n$\;
  \For{$l\leftarrow 1$ \KwTo $n_{\mathrm{epochs}}$}{
      $\forall i\in \{1,\ldots, n\}$: sample $x^{(i)}\sim \mu_i$; \quad sample $y\sim \eta$\;
  $\bar g \leftarrow \sum_{i=1}^n\lambda_i g_{\phi_i}(y)$\;
  $F \leftarrow \sum_{i=1}^n \lambda_i  \left( f_{\theta_i}(x^{(i)}) -  R^*\left(f_{\theta_i}(x^{(i)}) + g_{\phi_i}(y) -  \bar g - c(x^{(i)},y)\right)  \right)$\;
  for $i=1,\ldots,n$: $\texttt{ApplyGradient}(F,\theta_i)$;\quad $\texttt{ApplyGradient}(F,\phi_i)$\;
  }
  \KwRet{} dual potentials $\{(f_{\theta_i},g_{\phi_i})\}_{i=1}^n$.
 \end{algorithm}

\subsection{Recovering the barycenter} \label{sec:recover}
Given the optimal transport plan $\pi_i \in \prodprobspace$ for each $i$, the barycenter $\nu$ equals $(P_y)_\#\pi_i$ for any $i$ by Theorem~\ref{thm:wb_duality}. 
While this pushforward is straightforward to evaluate when $\pi_i$'s are discrete, in the continuous setting such marginalization is difficult, especially when the dimension of $\groundspace$ is large. 
Below we suggest a few ways to recover the barycenter from the transport plans:

\begin{enumerate}[label=(\alph*),leftmargin=*,itemsep=0pt]
    \item \label{enum:recover_integration}
        Use numerical integration to approximate $(P_y)_\#\pi_i(x) = \int \pi_i(x,y)\,\dd y$ with proper discretization of the space $\groundspace$, if $\pi_i$ has density (if the input distributions and $\eta$ have densities by~\eqref{eqn:primal_dual}).
    \item \label{enum:recover_mcmc}
        Use Markov chain Monte Carlo (MCMC) methods to sample according to $\pi_i$, again assuming it has (unnormalized) density, and then take the second components of all the samples.
\end{enumerate}
Option \ref{enum:recover_integration} is only viable for small dimensions.
Option \ref{enum:recover_mcmc} is capable of providing quality samples, but is slow in practice and requires case-by-case parameter tuning.
Both \ref{enum:recover_integration} and \ref{enum:recover_mcmc} additionally require knowing the densities of input distributions to evaluate $\pi_i$, which may not be available in practice.

A different kind of approach is to estimate a \emph{Monge map} approximating each $\pi_i$. 
Formally, a Monge map from $\mu\in\probspace$ to $\nu\in\probspace$ is a solution to $\inf_{T:\groundspace\to\groundspace} \int_\mathcal{X} c(x,T(x))\dd\mu(x)$ such that $T_{\#}(\mu)=\nu$. When the cost satisfies $c(x,y)=h(x-y)$ with a convex $h$ and $\mu$ has density, it is linked to the optimal transport plan $\pi$ between $\mu$ and $\nu$ by $\pi = (\mathrm{id},T)_{\#}(\mu)$~\cite{santambrogio2015optimal}.
With regularization, such exact correspondence may not hold. Nevertheless $\pi$ encodes the crucial information of a Monge map when the regularization is small.
If we can find $T_i: \groundspace \to \groundspace$ that realizes $\pi_i$ for each $i$, then we can recover the barycenter as $\sum_{i=1}^n\lambda_i (T_i)_\#\mu_i$. 
In the unregularized case, all of $(T_i)_\#\mu_i$ should agree. 
In practice, we have found that taking the weighted average of $(T_i)_\#\mu_i$'s helps reduce the error brought by each individual $T_i$.
We consider the following variants of Monge map estimation:
\begin{enumerate}[resume,label=(\alph*),leftmargin=*,itemsep=0pt]
\item \label{enum:recover_bproj}
Compute pointwise barycentric projection~\cite{courty2014domain, seguy2017large}. If $c(x,y)=\norm{x-y}_2^2$, then barycentric projection takes the simplified form
\begin{equation}
T_i(x) = \EE_{Y \sim \pi_i(\cdot \cond x)}[Y]. \label{eqn:monge_bproj}
\end{equation}
\item \label{enum:recover_grad}
    Recover an approximation of the Monge map using the gradient of the dual potentials~\cite{taghvaei20192}. 
For the case when $c(x,y)=\norm{x-y}_2^2$ and the densities of the source distributions exist, there exists a unique Monge map realizing the (unregularized) optimal transport plan $\pi_i$~\cite{santambrogio2015optimal}:
\begin{equation}
T_i(x) = x - \frac{1}{2}\nabla f_i(x). \label{eqn:monge_grad}
\end{equation}
While this does not strictly hold for the regularized case, it gives a cheap approximation of $T_i$'s.
\item \label{enum:recover_seguy}
Find $T_i$ as a solution to the following optimization problem~\cite{seguy2017large}, where $H$ is defined in~\eqref{eqn:primal_dual}:
\begin{align*}
T_i \defeq \argmin_{T: \mathcal{X} \to \mathcal{X}} \EE_{(X,Y)\sim \pi_i} \left[ c(T(X), Y) \right] 
= \argmin_{T \in \mathcal{X} \to \mathcal{X}} \EE_{\substack{X\sim\mu_i\\ Y \sim \eta}} \left[ c(T(X), Y) H(X,Y) \right]. \tagaligneq \label{eqn:monge_seguy}
\end{align*}
In~\cite{seguy2017large} each $T_i$ is parameterized as a neural network. 
In practice, the regularity of the neural networks smooths the transport map, avoiding erroneous oscillations due to sampling error in methods like barycentric projection~\ref{enum:recover_bproj} where each $T_i$ is estimated pointwise.
\end{enumerate}
Compared to \ref{enum:recover_integration}\ref{enum:recover_mcmc}, options~\ref{enum:recover_bproj}\ref{enum:recover_grad}\ref{enum:recover_seguy} do no require knowing the densities of the input distributions. See a comparison of these methods in Figure~\ref{fig:qualitative_various}.

\section{Implementation and Experiments} \label{sec:experiments}
We tested the proposed framework for computing a continuous approximation of the barycenter on both synthetic and real-world data.
In all experiments we use equal weights for input distributions, i.e., $\lambda_i=\frac{1}{n}$ for all $i=1,\ldots,n$.
Throughout we use the squared Euclidean distance as the cost function, i.e., $c(x,y) = \norm{x-y}_2^2$. 
Note that our method is not limited to Euclidean distance costs and can be generalized to different cost functions in $\RR^d$---or even to distance functions on curved domains. 
The source code is publicly available at \url{https://github.com/lingxiaoli94/CWB}.

\textbf{Implementation details.}
The support measure $\eta$ is set to be the uniform measure on a box containing the support of all the source distributions, estimated by sampling.

For $c(x,y) = \norm{x-y}_2^2$, we can simplify the (unregularized) Wasserstein barycenter problem by considering centered input distributions~\cite{alvarez2016fixed}. Concretely, if the mean of $\mu_i$ is $m_i$, then the mean of the resulting barycenter is $\sum_{i=1}^n \lambda_i m_i$, and we can first compute the barycenter of input distributions centered at 0 and then translate the barycenter to have the right mean.
We adopt this simplification since this allows us to reduce the size of the support measure $\eta$ when the input distributions are far apart.
When computing the Monge map~\ref{enum:recover_bproj}\ref{enum:recover_grad}\ref{enum:recover_seguy}, for each $i$, we further enforce $(T_i)_\#(\mu_i)$ to have zero mean by replacing $T_i$ with $T_i - \EE_{X \sim \mu_i}[T_i(X)]$. We have found that empirically this helps reduce the bias coming from regularization when recovering the Monge map.

The stochastic gradient descent used to solve (\ref{eqn:sgd_potentials}) and (\ref{eqn:monge_seguy}) is implemented in Tensorflow 2.1~\cite{abadi2016tensorflow}.
In all experiments below, we use Adam optimizer~\cite{kingma2014adam} with learning rate $10^{-4}$ and batch size $4096$ or $8192$ for the training. 
The dual potentials $\{f_i, g_i\}_{i=1}^n$ in (\ref{eqn:sgd_potentials}) are each parameterized as neural networks with two fully-connected layers ($d \to 128 \to 256 \to 1$) using ReLU activations. Every $T_i$ in~\eqref{eqn:monge_seguy} is parameterized with layers ($d \to 128 \to 256 \to d$).
We have tested with deeper/wider network architectures but have found no noticeable improvement.
We change the choice of the regularizer and the number of training iterations depending on the examples.

\textbf{Qualitative results in 2 and 3 dimensions.}
\label{sec:exp_2d}
Figure~\ref{fig:qualitative_various} shows the results for methods \ref{enum:recover_integration}-\ref{enum:recover_seguy} from section~\ref{sec:recover} on various examples. For each example represented as a row, we first train the dual potentials using quadratic regularization with $\epsilon=10^{-4}$ or $\epsilon=10^{-5}$. Then each method is run subsequently to obtain the barycenter. 
Algorithm~\ref{alg:sgd} takes less than $10$ minutes to finish for these experiments.\footnote{We ran our experiments using a  NVIDIA Tesla V100 GPU on a Google cloud instance with 12 compute-optimized CPUs and 64GB memory.} 
For \ref{enum:recover_integration} we use a discretized grid with grid size $200$ in 2D, and grid size $80$ in 3D.
For \ref{enum:recover_mcmc} we use Metropolis-Hastings to generate $10^5$ samples with a symmetric Gaussian proposal. 
The results from \ref{enum:recover_integration}\ref{enum:recover_mcmc} are aggregated from all transport plans.
For \ref{enum:recover_bproj}\ref{enum:recover_grad}\ref{enum:recover_seguy} we sample from each input distribution and then push the samples forward using $T_i$'s to have $10^5$ samples in total.

\begin{figure}
\newcommand{\qualitativerow}[2]{
\includegraphics[height=.75in]{images/comparison/#1/source_vis_dir/all.png} &
\includegraphics[height=.75in]{images/comparison/#1/marginal_vis_dir/step-#2_target_merged.png} &
\includegraphics[height=.75in]{images/comparison/#1/mcmc_vis_dir/mcmc_merged.png} &
\includegraphics[height=.75in]{images/comparison/#1/bproj_vis_dir/step-#2_merged.png} &
\includegraphics[height=.75in]{images/comparison/#1/grad_vis_dir/step-#2_merged.png} &
\includegraphics[height=.75in]{images/comparison/#1/map_vis_dir/step-#2_merged.png}
}
    \centering{\scriptsize
    \begin{tabular}{@{}cccccc@{}}
    \qualitativerow{annulus_final}{20000}\\
    \qualitativerow{mnist_digit_3_l2_p4_final}{20000}\\
    \qualitativerow{two_ellipses_final}{40000}\\
    \qualitativerow{cube_big_final}{20000}\\
    \includegraphics[height=.75in]{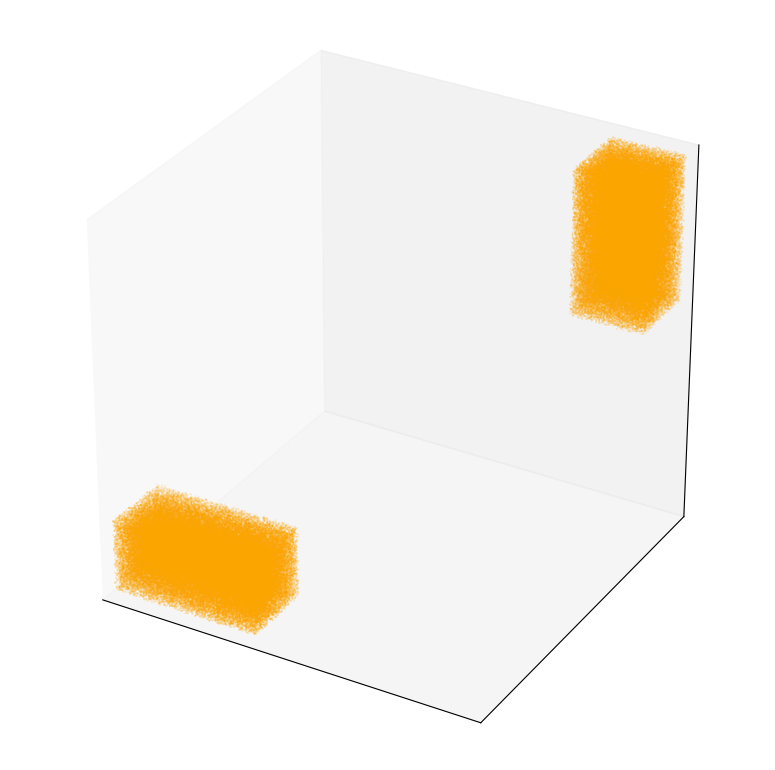} &
    \includegraphics[height=.70in, trim=0 -50 0 0]{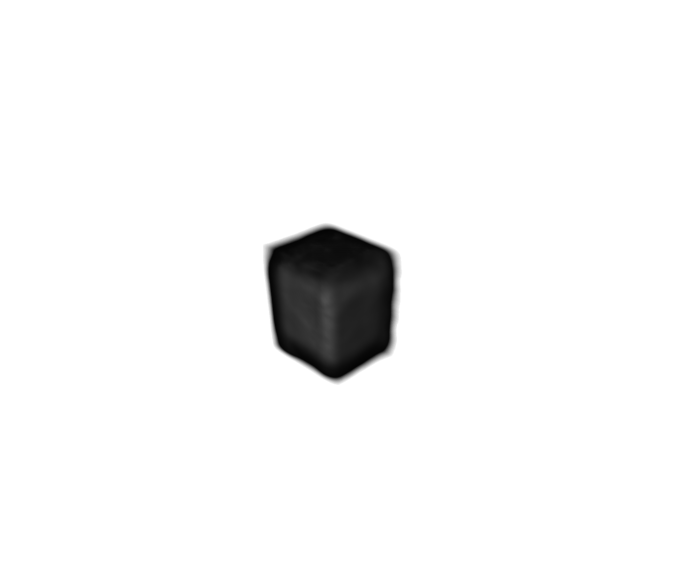} &
    \includegraphics[height=.75in]{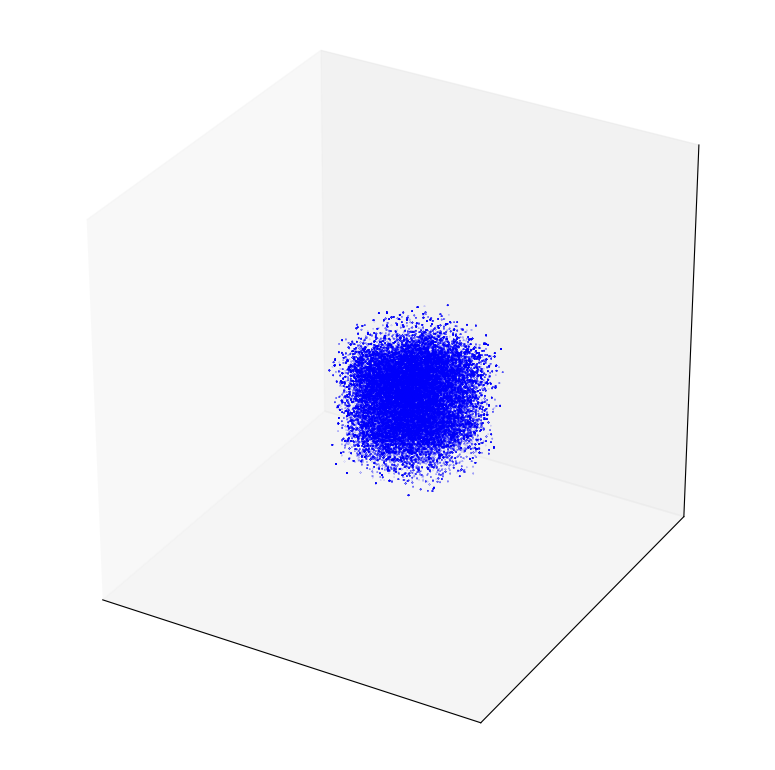} &
    \includegraphics[height=.75in]{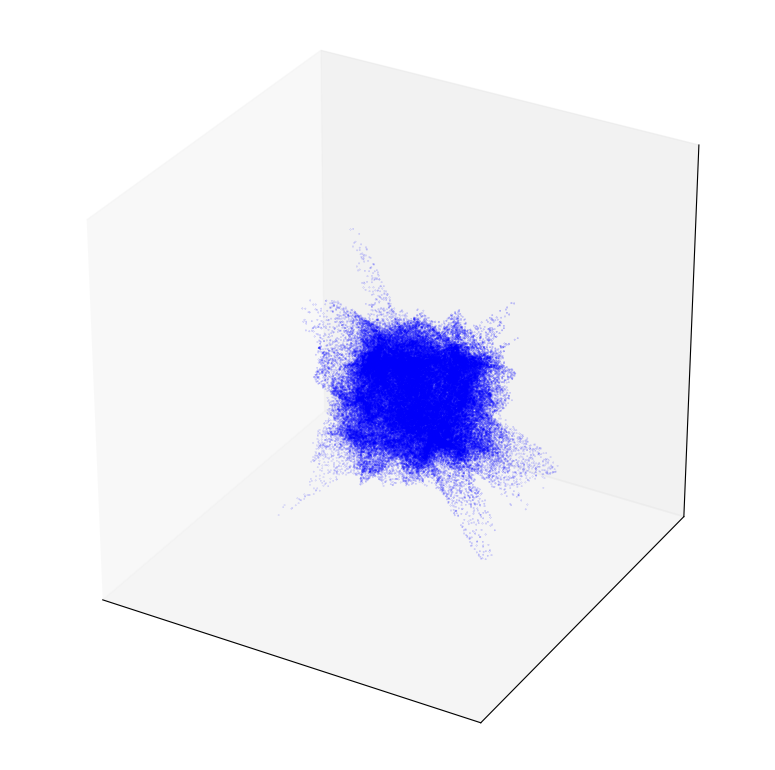} &
    \includegraphics[height=.75in]{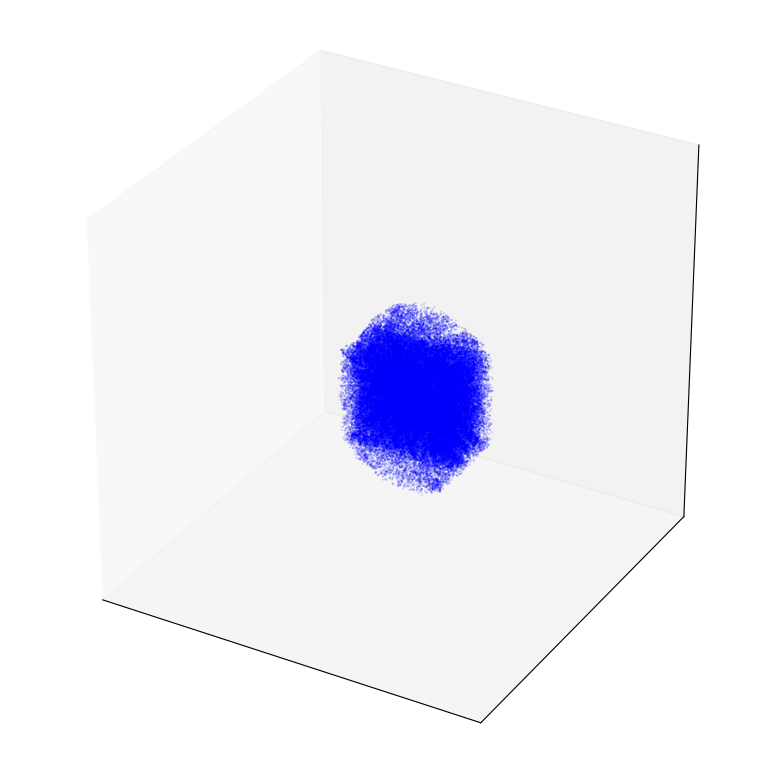} &
    \includegraphics[height=.75in]{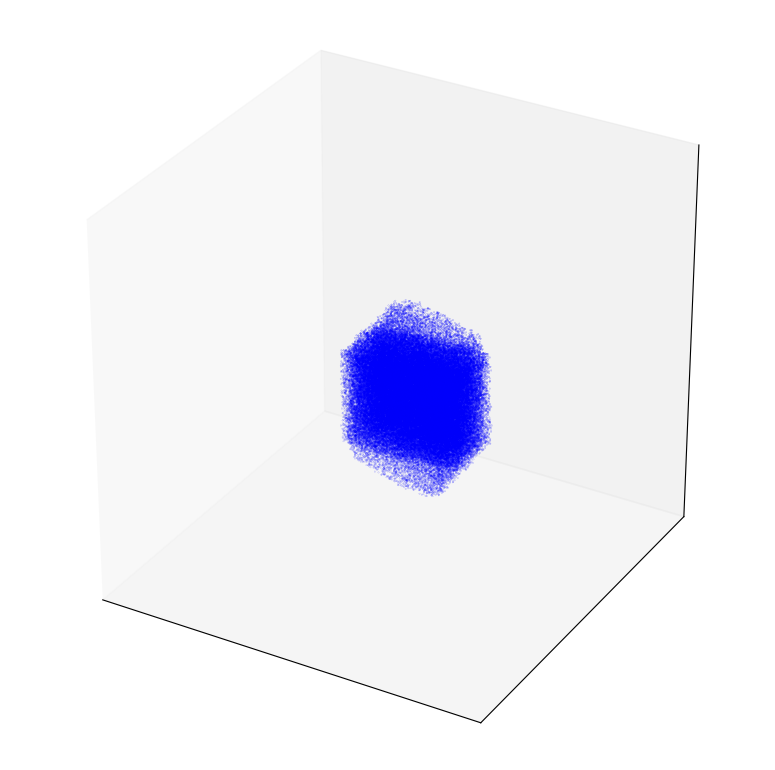} \\
    Source & \ref{enum:recover_integration} Integrated & \ref{enum:recover_mcmc} MCMC & \ref{enum:recover_bproj} Barycentric  & \ref{enum:recover_grad} Gradient map & \ref{enum:recover_seguy} \cite{seguy2017large} 
    \end{tabular}
    }
    \caption{Comparison of barycenter recovery methods.}
    \label{fig:qualitative_various}
\end{figure}

In short: \ref{enum:recover_integration} numerical integration shows the transport plans $\pi_i$'s computed by~\eqref{eqn:primal_dual} are accurate and smooth; %
 \ref{enum:recover_mcmc} MCMC samples match the barycenter in~\ref{enum:recover_integration} but are expensive to compute and can be blurry near the boundaries; 
\ref{enum:recover_bproj} barycentric projection yields poor boundaries due to the high variance in evaluating~\eqref{eqn:monge_bproj} pointwise;
\ref{enum:recover_grad} gradient-based map has fragmented white lines in the interior;
\ref{enum:recover_seguy} the method by~\cite{seguy2017large} can inherit undesirable artifact from the input distributions---for instance, in the last column of the second row the digit $3$ looks pixelated.

Next, we compare the impact of the choice of regularization and parameterization in Figure~\ref{fig:qualitative_param}.
We use the digit 3 example (row 2 in Figure~\ref{fig:qualitative_various}) and run numerical integration~\ref{enum:recover_integration} to recover the barycenter.
The first three columns confirm that smaller $\epsilon$ gives sharper results as the computed barycenter tends to the unregularized barycenter.
On the other hand, entropic regularization yields a smoother marginal, but smaller $\epsilon$ leads to numerical instability: we display the smallest one we could reach.
The last two columns show that parameterization using random Fourier features~\cite{rahimi2008random} gives a comparable result as using neural networks, but the scale of the frequencies needs to be fine-tuned.
\begin{figure}
    \centering
    \scriptsize{
    \begin{tabular}{@{}cccccc@{}}
        \includegraphics[height=.7in]{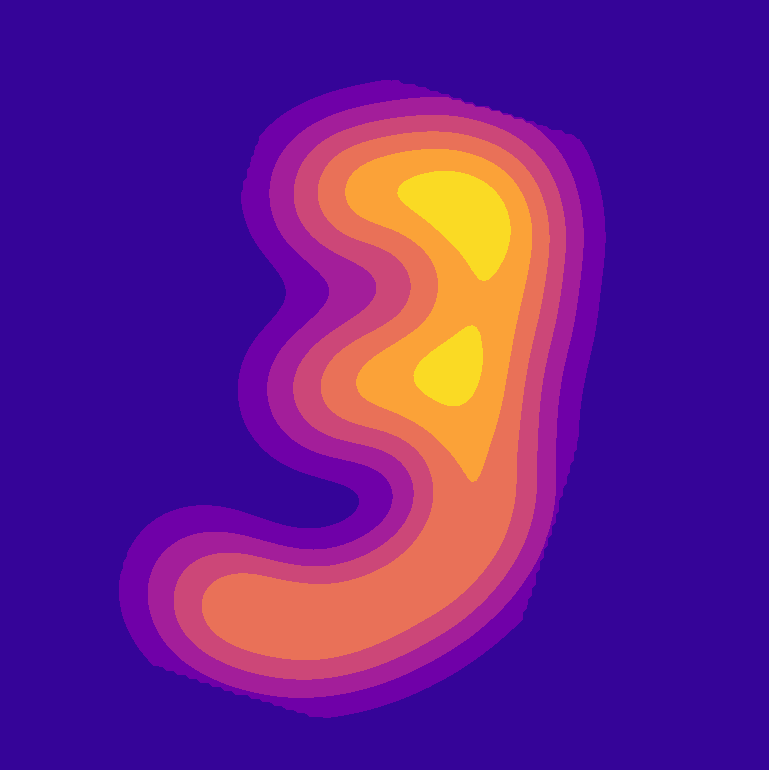} &
        \includegraphics[height=.7in]{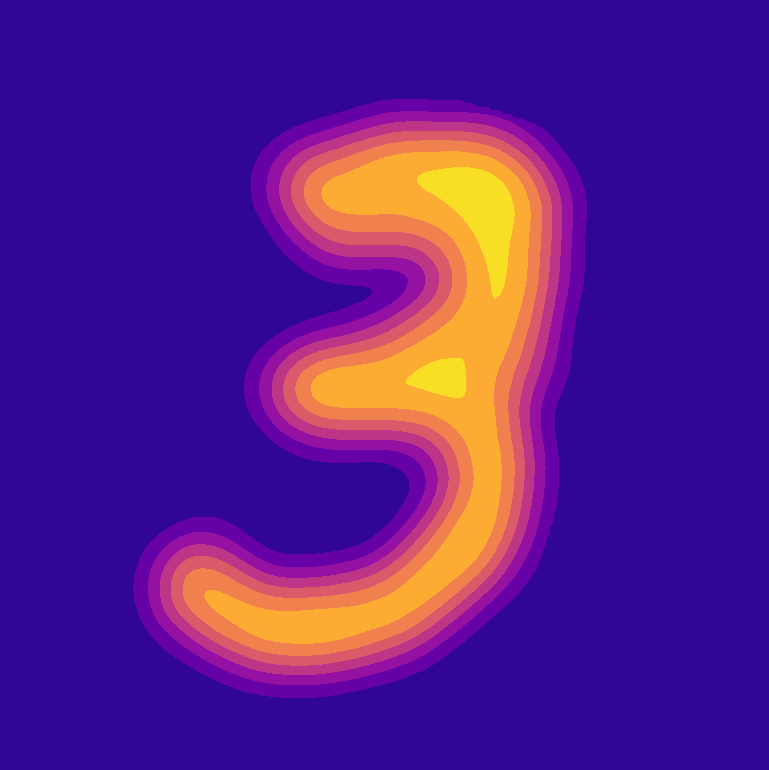} &
        \includegraphics[height=.7in]{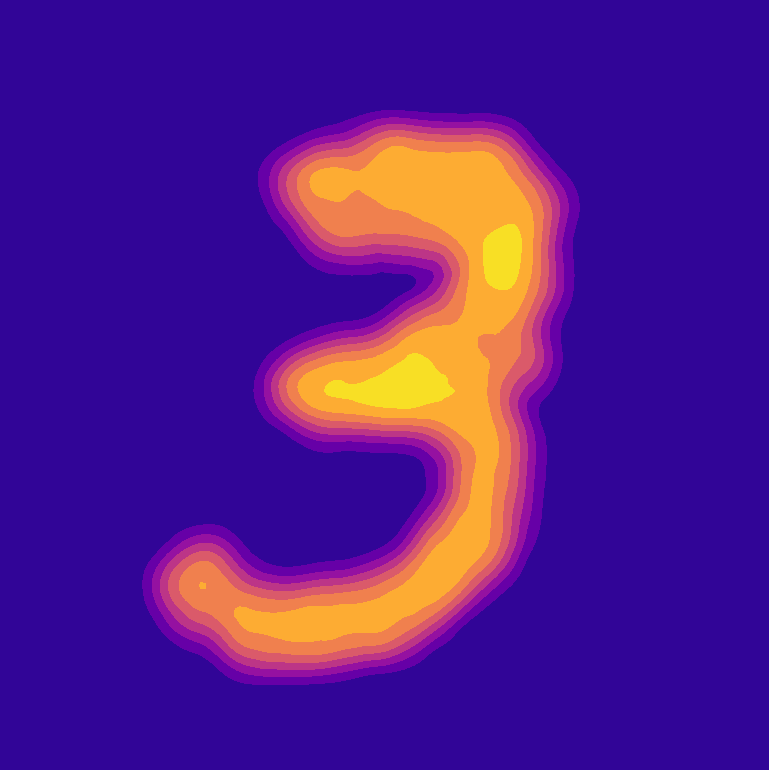} &
        \includegraphics[height=.7in]{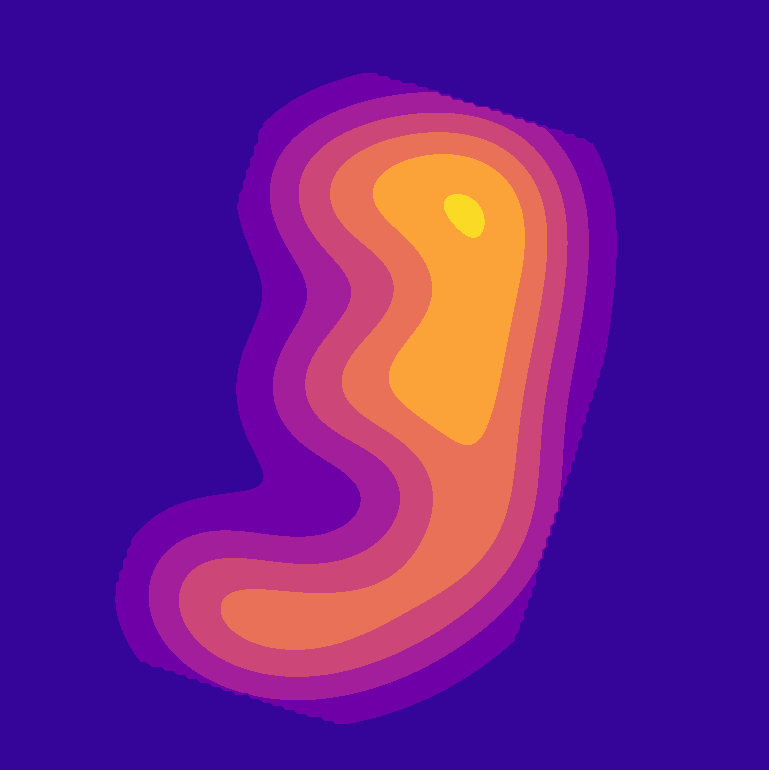} &
        \includegraphics[height=.7in]{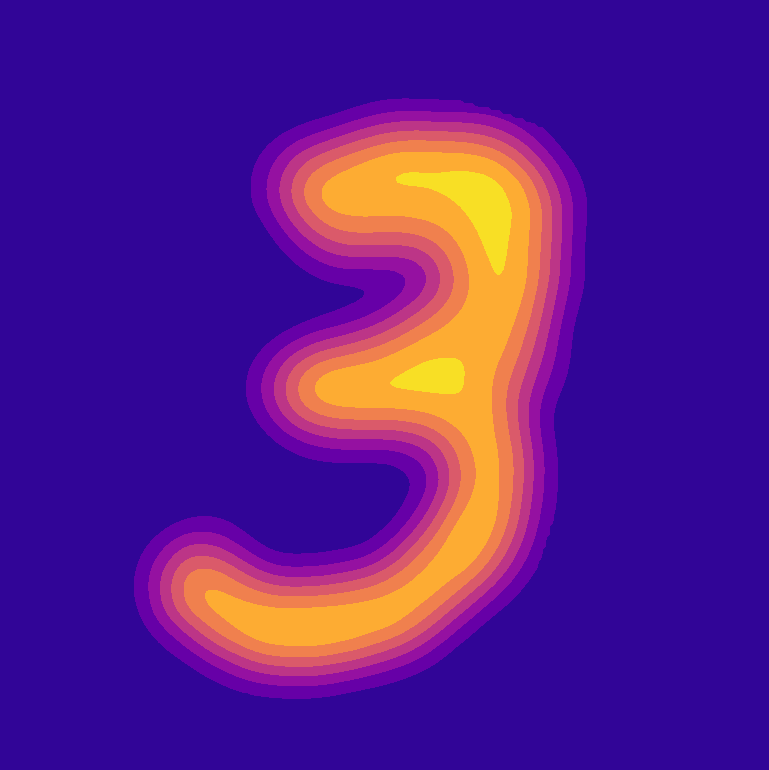} &
        \includegraphics[height=.7in]{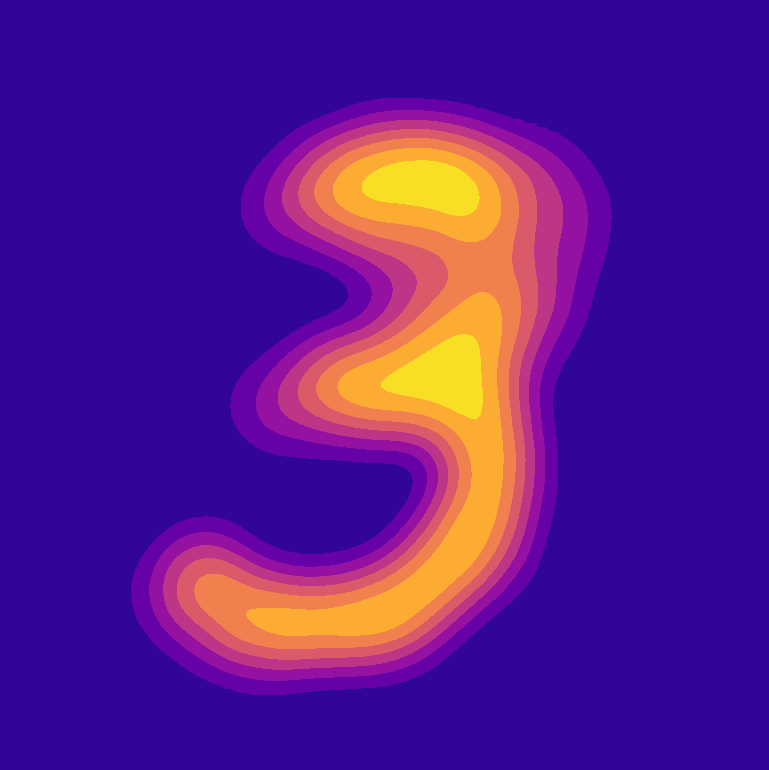} \\
        $R_{L_2}, \epsilon=10^{-3}$ & $R_{L_2}, \epsilon=10^{-4}$ & $R_{L_2}, \epsilon=10^{-5}$ & $R_e, \epsilon=10^{-2}$ & RFF, $f=1.0$ & RFF, $f=0.1$
    \end{tabular}
}
    \caption{
        Comparison of regularization and parameterization choices. Labels at the bottom row are the regularizer type and the value of the constant $\epsilon$ as in~\eqref{eqn:regularizers_primal}. $R_{L_2}, R_e$ means using quadratic and entropic regularization respectively. The last two columns show the result of using random Fourier features~\cite{rahimi2008random} instead of neural networks, with $f$ indicating the scale of the frequencies used.
    }
    \label{fig:qualitative_param}
\end{figure}

\textbf{Multivariate Gaussians with varying dimensions.}
When the input distributions are multivariate Gaussians, the (unregularized) barycenter is also a multivariate Gaussian, and an efficient fixed-point algorithm can be used to recover its parameters~\cite{alvarez2016fixed}.
We compute the ground truth barycenter of $5$ randomly generated multivariate Gaussians in varying dimensions using~\cite{alvarez2016fixed} and compare our proposed algorithm to other state-of-the-art barycenter algorithms.
Since measuring the Wasserstein distance of two distributions in high dimensions is computationally challenging, we instead compare the MLE parameters if we fit a Gaussian to the computed barycenter samples and compare with the true parameters.
See Table~\ref{tbl:gaussian_cmp} for the results of our algorithm with quadratic regularization compared with those from other state-of-the-art free-support methods.
Among the Monge map estimation methods, the gradient-based Monge map~\ref{enum:recover_grad} works the best in higher dimensions, and the result of~\ref{enum:recover_seguy} is slightly worse: we believe this is due to the error accumulated in the second stochastic gradient descent used to compute~\eqref{eqn:monge_seguy}. 
For brevity, we only include~\ref{enum:recover_grad} in Table~\ref{tbl:gaussian_cmp}. 
Note that discrete fixed-support algorithms will have trouble scaling to higher dimensions as the total number of grid points grows exponentially with the number of dimensions. 
For instance, the covariance difference between the ground truth and those from running~\cite{staib2017parallel} with $10^5$ support points in $\RR^4$ is $5.99 \times 10^{-2}(\pm 6.19 \times 10^{-3})$, which is significantly worse than the ones shown in Table~\ref{tbl:gaussian_cmp}. 
See the supplementary document for more details.
\begin{table}
\centering
\caption{Comparison of free-support barycenter algorithms on multivariate Gaussians of varying dimensions. Reported are the covariance difference $\norm{\Sigma - \Sigma^*}_F$ where $\Sigma$ is the MLE covariance of the barycenter computed by each method, $\Sigma^*$ is the ground truth covariance, and $\norm{\cdot}_F$ is the Frobenius norm. Smaller is better. All experiments are repeated $5$ times with the mean and standard deviation reported. 
We use $5000$ and $100$ support points in~\cite{cuturi2014fast} and~\cite{claici2018stochastic} respectively, as these are the maximum numbers allowed for the algorithms to terminate in a reasonable amount of time.}
\label{tbl:gaussian_cmp}
\scriptsize
\begin{tabular}{cccc}
\toprule
Dimension & \cite{cuturi2014fast} & \cite{claici2018stochastic} & Ours with~\ref{enum:recover_grad} and $R_{L_2}$  \\
\midrule
$2$ & {\boldmath$7.28{\times 10^{-4}}$($9.99{\times 10^{-5}}$)} &  $2.39{\times 10^{-3}}$($3.14{\times 10^{-4}}$) & $1.98{\times 10^{-3}}$($1.17{\times 10^{-4}}$) \\
$3$ &  {\boldmath$4.96{\times 10^{-3}}$($6.42{\times 10^{-4}}$)} & $8.97{\times 10^{-3}}$($9.22{\times 10^{-4}}$) & $5.05{\times 10^{-3}}$($6.32{\times 10^{-4}}$) \\
$4$ & $1.35{\times 10^{-2}}$($1.73{\times 10^{-3}}$) & $2.50{\times 10^{-2}}$($1.68{\times 10^{-3}}$) &  {\boldmath$1.22{\times 10^{-2}}$($1.44{\times 10^{-3}}$)} \\
$5$ & $2.43{\times 10^{-2}}$($1.87{\times 10^{-3}}$) & $5.05{\times 10^{-2}}$($2.22{\times 10^{-3}}$) & {\boldmath$1.52{\times 10^{-2}}$($1.18{\times 10^{-3}}$)}\\
$6$ & $4.38{\times 10^{-2}}$($2.04{\times 10^{-3}}$) & $8.86{\times 10^{-2}}$($2.58{\times 10^{-3}}$) &  {\boldmath$2.37{\times 10^{-2}}$($3.24{\times 10^{-3}}$)} \\
$7$ & $5.91{\times 10^{-2}}$($1.26{\times 10^{-3}}$) & $1.24{\times 10^{-1}}$($1.63{\times 10^{-3}}$) & {\boldmath$4.07{\times 10^{-2}}$($2.65{\times 10^{-3}}$)} \\
$8$ & $8.31{\times 10^{-2}}$($1.23{\times 10^{-3}}$) & $1.64{\times 10^{-1}}$($1.48{\times 10^{-3}}$) & {\boldmath$4.23{\times 10^{-2}}$($3.14{\times 10^{-3}}$)}  \\
\bottomrule
\end{tabular}
\end{table}
In this experiment, we are able to consistently outperform state-of-the-art free-support methods in higher dimensions with the additional benefit of providing sample access from the barycenter.

\textbf{Subset posterior aggregation.}
To show the effectiveness of our algorithm in real-world applications, we apply our method to aggregate subset posterior distributions using barycenters, which has been shown as an effective alternative to the full posterior in the massive data setting~\cite{srivastava2015wasp, srivastava2018scalable, staib2017parallel}. 
We consider Poisson regression for the task of predicting the hourly number of bike rentals using features such as the day of the week and weather conditions.\footnote{\url{http://archive.ics.uci.edu/ml/datasets/Bike+Sharing+Dataset}} We use one intercept and 8 regression coefficients for the Poisson model, and consider the posterior on the 8-dimensional regression coefficients. We randomly split the data into $5$ equally-sized subsets and obtain $10^5$ samples from each subset
posterior using the Stan library~\cite{carpenter2017stan}.%

The barycenter of subset posteriors converges to the full data posterior~\cite{srivastava2018scalable}. Hence, to evaluate the quality of the barycenter computed from the subset posterior samples, we use the full posterior samples as the ground truth and report the differences in covariance using sufficiently many samples from the barycenter, and compare against other free-support barycenter algorithms (Table~\ref{tbl:wasp_cmp}). See the supplementary document for more details.
To show how the quality of the barycenter improves as more samples are used from our barycenter, we plot the 2-Wasserstein distance versus the number of samples in Figure~\ref{fig:wasp_W2_vs_samples}.
Since computing $W_2(\nu,\nu^*)$ requires solving a large linear program, we are only able to produce the result up to $15000$ samples. 
This is also a limitation in~\cite{cuturi2014fast} as each iteration of their alternating optimization solves many large linear programs; for this reason we are only able to use $5000$ support points for their method.
We see that as we use more samples, $W_2$ steadily decreases with lower variance, and we expect the decrease to continue with more samples. 
With $15000$ samples our barycenter is closer to the full posterior than that of~\cite{cuturi2014fast}.

\begin{table}
\centering
\caption{Comparison of subset posterior aggregation results in the covariance difference $\norm{\Sigma-\Sigma^*}$, where $\Sigma$ is the covariance of the barycenter samples from each method, and $\Sigma^*$ is that of the full posterior. All experiments are repeated $20$ times with the mean and standard deviation reported. 
As in Table~\ref{tbl:gaussian_cmp}, we use $5000$ support points in~\cite{cuturi2014fast} and $100$ support points in~\cite{claici2018stochastic} as these are the maximum numbers permitted for the algorithms to terminate in a reasonable amount of time.}
\label{tbl:wasp_cmp}
\scriptsize
\begin{tabular}{ccc}
\toprule
 \cite{cuturi2014fast} & \cite{claici2018stochastic} & Ours with \ref{enum:recover_grad} and $R_{L_2}$ \\
\midrule
$2.56{\times 10^{-7}}$($2.17{\times 10^{-9}}$) &  $9.37{\times 10^{-4}}$($4.84{\times 10^{-5}}$) & {\boldmath\bfseries$2.43{\times 10^{-7}}$($6.57 {\times 10^{-8}}$)}\\
\bottomrule
\end{tabular}
\end{table}

\begin{figure}
\vspace*{-6pt}
    \centering
    \includegraphics[width=0.6\textwidth, clip]{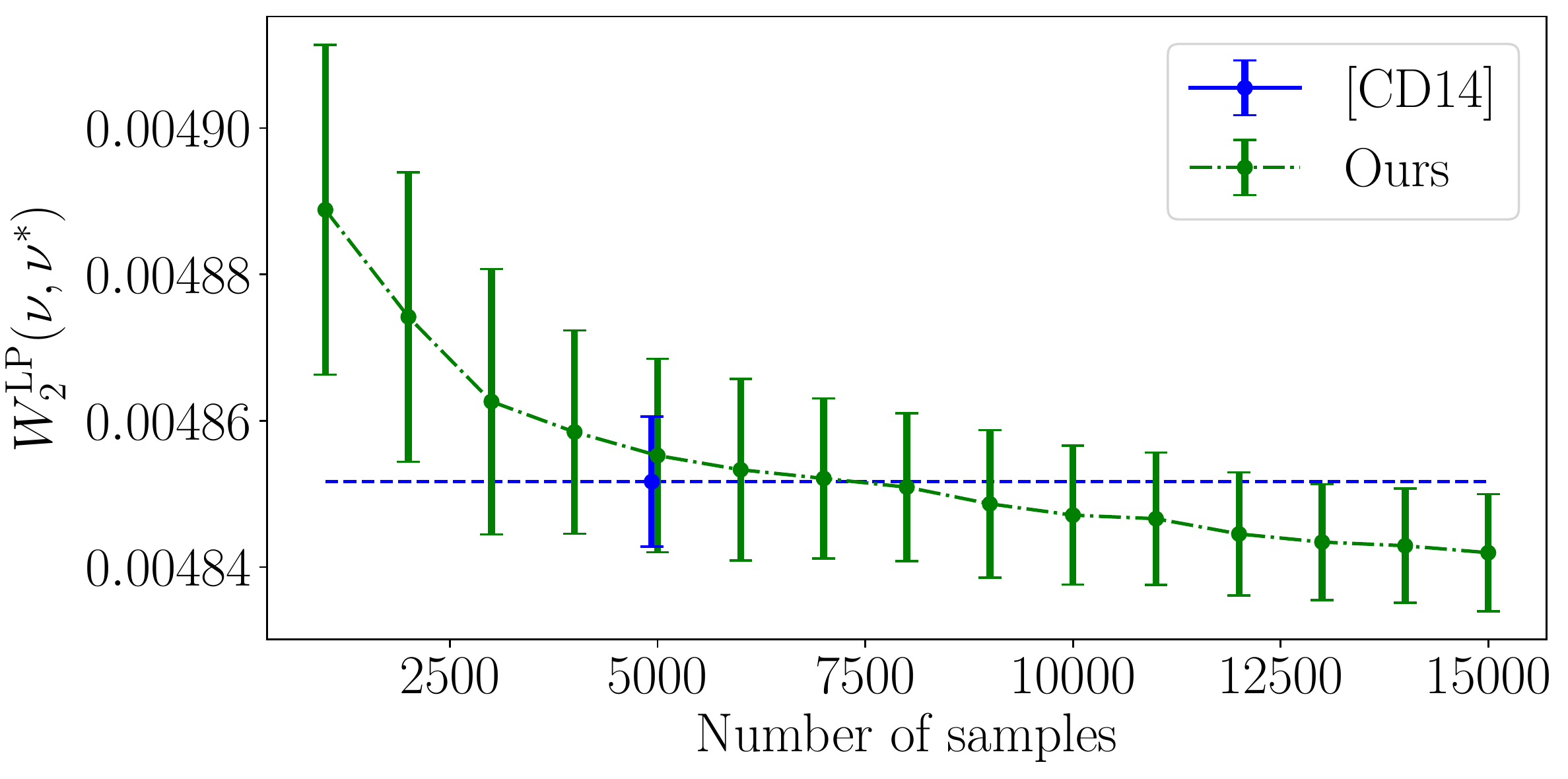}
    \vspace*{-7pt}
    \caption{2-Wasserstein distance versus the number of samples from the output of our algorithm with (d). Same number of points are used from both the full posterior and the computed barycenter to compute $W_2^{\mathrm{LP}}(\nu,\nu^*)$. The blue bar is the result of~\cite{cuturi2014fast} with $5000$ support points. The caps around each solid dot indicate the standard deviation across $20$ independent trials.\vspace*{-7pt}}
    \label{fig:wasp_W2_vs_samples}
\end{figure}

\section{Conclusion}
Our stochastic algorithm computes the barycenter of continuous distributions without discretizing the output barycenter, and has been shown to provide a clear advantage over past methods in higher dimensions. However the performance of our algorithm still suffers from a curse of dimensionality when the dimension is too high.
Indeed in this case the support measure we fix from the beginning becomes a poor proxy for the true barycenter, and an enormous batch size is required to evaluate the expectation in \eqref{eqn:sgd_potentials} with reasonably small variance.
One future direction is to find a way to estimate the support measure dynamically, but choosing a representation for this task is challenging.
Another issue that can be addressed is to reduce regularization bias. This can either happen by formulating alternative versions of the dual problem or by improving the methods for estimating a Monge map.

\section*{Broader Impact}

In this paper, we propose new techniques for estimating Wasserstein barycenters. The Wasserstein barycenter is a purely mathematical notion that can be applied to solve a variety of problems in practice. Beyond generic misuses of statistical and machine learning techniques, we are not aware of any unethical applications specifically of Wasserstein barycenters and hope that practitioners will find use cases for our methodology that can benefit society.

\section*{Acknowledgments}

Justin Solomon and the MIT Geometric Data Processing group acknowledge the generous support of Army Research Office grants W911NF1710068 and W911NF2010168, of Air Force Office of Scientific Research award FA9550-19-1-031, of National Science Foundation grant IIS-1838071, from the CSAIL Systems that Learn program, from the MIT--IBM Watson AI Laboratory, from the Toyota--CSAIL Joint Research Center, from a gift from Adobe Systems, and from the Skoltech--MIT Next Generation Program.

\section*{References}

\printbibliography[heading=none]
\newpage
\numberwithin{equation}{section}
\numberwithin{table}{section}
\begin{appendices}
\section{Proof of Theorem~\ref{thm:wb_duality}}
We restate Theorem~\ref{thm:wb_duality} below.
\label{sec:regularized_ot_duality_proof}
\begin{theorem}
    \label{thm:app_wb_duality}
    The dual problem of 
    \begin{equation}
        \inf_{\nu \in \mathcal{M}_1^+(\mathcal X)} \sum_{i=1}^n \lambda_i W_R^{\mu_i \otimes \eta}(\mu_i, \nu).
        \label{eqn:app_wb_regularized_primal}
    \end{equation}
    is
    \begin{equation}
        \sup_{\substack{\{(f_i,g_i)\}_{i=1}^n \subset C(\groundspace)^2 \\ \sum_{i=1}^n \lambda_i g_i = 0}}\sum_{i=1}^n \lambda_i \left( \int f_i \,\dd\mu_i - \iint R^*\left(f_i(x) + g_i(y) - c(x,y)\right)\,\dd\mu_i(x)\,\dd\eta(y)  \right).
        \label{eqn:app_wb_regularized_dual}
    \end{equation}
   Moreover, strong duality holds in the sense that the infimum of \eqref{eqn:app_wb_regularized_primal} equals the supremum of \eqref{eqn:app_wb_regularized_dual}, and a solution to \eqref{eqn:app_wb_regularized_primal} exists.
   If $\{(f_i, g_i)\}_{i=1}^n$ solves~\eqref{eqn:app_wb_regularized_dual}, then each $(f_i,g_i)$ is a solution to the dual formulation~\eqref{eqn:wd_regularized_dual} of $W_R^{\mu_i\otimes\eta}(\mu_i,\nu)$ where $\nu$ is a solution to~\eqref{eqn:app_wb_regularized_primal}. That is,
   \begin{align*}
       W_R^{\mu_i\otimes\eta}(\mu_i,\nu) &= \sup_{\substack{f,g\in C(\groundspace)}}  \int f\dd\mu +  \int g\dd\nu
        - \int R^*\left(f\oplus g-c\right)\dd\mu_i\otimes\eta \tagaligneq \label{eqn:app_wd_regularized_dual} \\
        &=  \int f_i\dd\mu +  \int g_i\dd\nu- \int R^*\left(f_i\oplus g_i-c\right)\dd\mu_i\otimes\eta,
   \end{align*}
    where we write $(f\oplus g)(x,y)\defeq f(x)+g(y)$.
\end{theorem}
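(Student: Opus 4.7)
The plan is to start from the dual characterization~\eqref{eqn:wd_regularized_dual} of each $W_R^{\mu_i\otimes\eta}(\mu_i,\nu)$ and exchange the outer infimum over $\nu$ with the supremum over dual potentials. Substituting the dual formula gives
\[
P \defeq \inf_{\nu\in\mathcal{M}_1^+(\groundspace)} \sum_{i=1}^n \lambda_i W_R^{\mu_i\otimes\eta}(\mu_i,\nu) = \inf_{\nu}\,\sup_{\{(f_i,g_i)\}_{i=1}^n}\, J(\nu,f,g),
\]
where $J(\nu,f,g) := \sum_i \lambda_i\bigl[\int f_i\,\dd\mu_i + \int g_i\,\dd\nu - \iint R^*(f_i(x)+g_i(y)-c(x,y))\,\dd\mu_i(x)\,\dd\eta(y)\bigr]$. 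I would then apply Sion's minimax theorem (equivalently, Fenchel--Rockafellar duality in locally convex TVS as in~\cite{ekeland1999convex}). The hypotheses hold because $\mathcal{M}_1^+(\groundspace)$ is weak-* compact and convex (Banach--Alaoglu on the compact $\groundspace$), $J(\cdot,f,g)$ is affine and weak-* continuous in $\nu$ (continuous pairings against $g_i\in C(\groundspace)$), and $J(\nu,\cdot)$ is concave and continuous in $(f,g)\in C(\groundspace)^{2n}$ by the convexity and continuity of $R^*$ with arguments bounded on the compact domain.

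After the exchange, the inner infimum simplifies via $\inf_{\nu\in\mathcal{M}_1^+}\int h\,\dd\nu = \min_y h(y)$ (attained by a Dirac at a minimizer of the continuous function $h := \sum_i\lambda_i g_i$), producing $P = \sup_{(f_i,g_i)}\bigl[A(f,g) + \min_y h(y)\bigr]$, where $A(f,g) := \sum_i\lambda_i\int f_i\,\dd\mu_i - \sum_i\lambda_i\iint R^*(f_i\oplus g_i-c)\,\dd\mu_i(x)\,\dd\eta(y)$. To recast this as~\eqref{eqn:app_wb_regularized_dual} with constraint $\sum_i\lambda_i g_i = 0$, I would apply a two-step shift, assuming WLOG $\sum_i\lambda_i=1$: first, the constant shift $g_i\mapsto g_i-m$ and $f_i\mapsto f_i+m$ with $m := \min_y h$ leaves $f_i\oplus g_i$ invariant, raises $A$ by $m$, and makes the new $h$ pointwise nonnegative; second, the functional shift $g_i\mapsto g_i - h_{\mathrm{new}}$ (on $g_i$'s only) brings $\sum_i\lambda_i g_i$ to zero, and since $R^*$ is nondecreasing and the arguments drop by a nonnegative function, the $R^*$ integrals can only shrink, so $A$ does not decrease. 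The transformed point lies in the constrained feasible set with objective value at least the original unconstrained one, which establishes equality of the two suprema.

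Primal existence follows by the direct method: $\mathcal{M}_1^+(\groundspace)$ is weak-* compact and $\nu\mapsto \sum_i\lambda_i W_R^{\mu_i\otimes\eta}(\mu_i,\nu)$ is weak-* lower semicontinuous as a supremum of weak-* continuous affine functionals via~\eqref{eqn:wd_regularized_dual}, so the infimum is attained at some $\nu^*$. For the ``moreover'' claim, let $\{(f_i^*,g_i^*)\}$ solve~\eqref{eqn:app_wb_regularized_dual}. Weak duality applied separately to each $W_R^{\mu_i\otimes\eta}(\mu_i,\nu^*)$ gives $W_R^{\mu_i\otimes\eta}(\mu_i,\nu^*) \geq \int f_i^*\,\dd\mu_i + \int g_i^*\,\dd\nu^* - \iint R^*(f_i^*\oplus g_i^*-c)\,\dd\mu_i(x)\,\dd\eta(y)$; weighting by $\lambda_i$, summing, and using $\sum_i\lambda_i g_i^* = 0$ recovers exactly the dual objective at $(f_i^*,g_i^*)$, which equals $P$ by strong duality. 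Hence every inequality must be tight, so each $(f_i^*,g_i^*)$ attains the supremum in~\eqref{eqn:wd_regularized_dual} for $W_R^{\mu_i\otimes\eta}(\mu_i,\nu^*)$.

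The main obstacle will be rigorously verifying the abstract duality hypotheses in the chosen topologies: $C(\groundspace)^{2n}$ is not compact so the dual supremum is not automatically attained, and for the entropic regularizer $R^*(t) = \epsilon e^{t/\epsilon}$ one must argue carefully (using compactness of $\groundspace$ to bound $c$ and the relevant potentials) that the functional remains continuous in the sup-norm topology and that Sion's setup applies. The two-step shift argument, while elementary, relies crucially on the monotonicity of $R^*$, which fortunately holds for both regularizers of interest.
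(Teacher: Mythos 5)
Your argument is correct, but it reaches the result by a genuinely different route than the paper. The paper treats the \emph{potential} problem as the primal of an abstract constrained convex program ($\inf\{J(u)\cond B(u)\le 0\}$ with $B(u)=-\sum_i\lambda_i g_i$), computes its Lagrangian dual over nonnegative measures, and invokes the perturbation-duality machinery of \cite{ekeland1999convex} -- verifying convexity and lower semicontinuity of $J$, the interior-point condition, and finiteness of the infimum -- so that stability yields both strong duality and \emph{dual attainment}, which is precisely how the paper obtains existence of the barycenter $\nu^*$ (it then needs an extra argument, with $f=t$, $g=-t$, to show $\nu^*$ has unit mass). You instead start from the measure problem, substitute the per-term duality \eqref{eqn:wd_regularized_dual}, swap $\inf_\nu$ and $\sup_{(f,g)}$ by Sion's minimax theorem (using weak-* compactness of $\mathcal{M}_1^+(\groundspace)$, which also spares you the unit-mass argument), reduce $\inf_\nu\int(\sum_i\lambda_i g_i)\,\dd\nu$ to a pointwise minimum, and then convert the resulting unconstrained supremum into the constrained form \eqref{eqn:app_wb_regularized_dual} by your two-step shift -- the same normalization trick the paper uses later for \eqref{eqn:sgd_potentials}, and your monotonicity argument for it is sound. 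Existence of $\nu^*$ you get separately by the direct method (weak-* compactness plus l.s.c.\ of the objective as a supremum of continuous affine functionals), and your final tightness argument for the ``moreover'' clause coincides with the paper's. Two points deserve explicit care but do not undermine the approach: (i) your minimax step needs \eqref{eqn:wd_regularized_dual} to hold for \emph{every} fixed $\nu\in\mathcal{M}_1^+(\groundspace)$, including $\nu$ not absolutely continuous with respect to $\eta$, where both sides are $+\infty$; this is a reliance comparable to the paper's own use of \eqref{eqn:wd_regularized_dual} at step \eqref{eqn:duality_proof_apply_local_duality} for arbitrary nonnegative measures; and (ii) you should note that the common value is finite (e.g.\ by testing $\nu=\eta$), since otherwise the concluding tightness argument would be vacuous -- the paper verifies this as its condition (f). What each route buys: the paper's framework delivers existence of $\nu^*$ as a byproduct of stability at the cost of checking the Ekeland--Temam hypotheses, while your route is more elementary and symmetric (Sion plus direct method) and avoids the mass-normalization step, at the cost of invoking the per-term duality pointwise in $\nu$ and supplying existence by hand.
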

\begin{proof}
We first prove the strong duality.
We view $C(\groundspace)$ as a normed vector space with the supremum norm.
Let $V = \oplus_{i=1}^{2n} C(\groundspace)$ be the direct sum vector space endowed with the natural norm, i.e., for $u = \{(f_i, g_i)\}_{i=1}^n \in V$,
\[
    \norm{u} \defeq \sum_{i=1}^n (\norm{f_i} + \norm{g_i})
.\]
For brevity, denote $\xi_i \defeq \mu_i \otimes \eta$.
We use the notation $\xi_i$ to suggest that a more general support measure can be used to establish the strong duality.
Define $J: V \to \RR$ to be, for $u = \{(f_i, g_i)\}_{i=1}^n$,
\begin{align*}
    J(u) &\defeq \sum_{i=1}^n\lambda_i \left(\iint R^*(f_i(x)+g_i(y)-c(x,y))\dd \xi_i(x,y) - \int f_i\dd\mu_i\right) \\
         &=  \sum_{i=1}^n\lambda_i \left(\iint R^*(f_i\oplus g_i-c)\dd \xi_i - \int f_i\dd\mu_i\right) 
    \tagaligneq\label{eqn:duality_proof_J}
.
\end{align*}
Let $Y \defeq C(\groundspace)$. By the Riesz–Markov–Kakutani representation theorem, the continuous dual space of $Y$ is $Y^* = \mathcal{M}(\groundspace)$ with the pairing $\langle p^*, p \rangle = \int p \dd p^*$ for $p^* \in Y^*, p \in Y$. 
Define $B: V \to Y$ as, for $u = \{(f_i, g_i)\}_{i=1}^n$,
\[
    B(u) = B(\{(f_i,g_i)\}_{i=1}^n) \defeq -\sum_{i=1}^n \lambda_i g_i.
\]
Then the negative of~\eqref{eqn:app_wb_regularized_dual} becomes
\begin{align*}
    \inf_{\substack{u \in V\\ B(u) = 0}} J(u),
\end{align*}
where the equality $B(u) = 0$ is component-wise (i.e. $B(u)$ is the constant-zero function in $Y$). 
Similarly we use $\le, <$ to mean component-wise inequality in $C(\groundspace)$.
Since $R^*: \RR \to \RR$ is increasing in our assumption for quadratic and entropic regularization \eqref{eqn:regularizers_dual}, the above program is the same as
\begin{align*}
    \inf_{\substack{u \in V\\ B(u) \le 0}} J(u). \tagaligneq \label{eqn:duality_proof_primal}
\end{align*}
This is because if $u \in \{u\cond B(u) \le 0\} \setminus \{u\cond B(u) = 0\}$, then for some $x \in \groundspace$, $\sum_{i=1}^n \lambda_i g_i(x) > 0$, and by replacing every $g_i$ with $g_i - \sum_{i=1}^n \lambda_i g_i(x)$ the objective~\eqref{eqn:duality_proof_primal} can only gets smaller.

The dual problem of (\ref{eqn:duality_proof_primal}) can be calculated as (see Chapter III.(5.23) in~\cite{ekeland1999convex})
\begin{align*}
    & \sup_{\nu \le 0} \inf_{u\in V} \left\{-\int B(u)\dd\nu + J(u)\right\} \\
    =& \sup_{\nu \le 0} \inf_{\{(f_i,g_i)\}_{i=1}^n\subset C(\groundspace)^2} -\int \left(-\sum_{i=1}^n\lambda_i g_i\right)\dd\nu + \sum_{i=1}^n\lambda_i \left(\iint R^*(f_i\oplus g_i-c)\dd \xi_i - \int f_i\dd\mu_i\right) \\
    =& \sup_{\nu \ge 0}\inf_{\{(f_i,g_i)\}_{i=1}^n\subset C(\groundspace)^2}\sum_{i=1}^n\lambda_i\left(\iint R^*(f_i\oplus g_i-c)\dd \xi_i - \int f_i\dd\mu_i - \int g_i \dd\nu\right) \\
    =& \sup_{\nu \ge 0}\sum_{i=1}^n\lambda_i\inf_{(f_i,g_i)\in C(\groundspace)^2}\left(\iint R^*(f_i\oplus g_i-c)\dd \xi_i - \int f_i\dd\mu_i - \int g_i \dd\nu\right) \tagaligneq \label{eqn:duality_proof_inf_summand}\\
    =& \sup_{\nu \ge 0} \sum_{i=1}^n -\lambda_i W^{\xi_i}_{R}(\mu_i, \nu) \tagaligneq \label{eqn:duality_proof_apply_local_duality}\\
    =& -\inf_{\nu \ge 0}\sum_{i=1}^n \lambda_i W^{\xi_i}_{R}(\mu_i, \nu). \tagaligneq\label{eqn:duality_proof_dual}
\end{align*}
To get~\eqref{eqn:duality_proof_apply_local_duality} we used the duality for regularized Wasserstein distance~\eqref{eqn:wd_regularized_dual}.

In order to apply classical results from convex analysis (for instance, Proposition 5.1 of Chapter III in~\cite{ekeland1999convex}) to establish strong duality and the existence of solutions, we need to show:
\begin{enumerate}[label=(\alph*)]
    \item
        $J$ is a convex l.s.c. (lower-semicontinuous) function. \label{enum:duality_convex_J}
    \item
        $B$ is convex with respect to $\le$. \label{enum:duality_convex_B}
    \item
        For any $\nu \in Y^*$, $\nu \ge 0$, the map $u \mapsto \int B(u) \dd \nu$ is l.s.c..
    \item
        $\{u \in V \cond B(u) \le 0\} \neq \varnothing$. 
    \item
        There exists $u_0 \in V$ such that $-B(u_0) < 0$. \label{enum:duality_interior}
    \item
        The infimum in (\ref{eqn:duality_proof_primal}) is finite. \label{enum:duality_finite}
\end{enumerate}
Since $B$ is linear and $Y = C(\groundspace)$ in our case, the conditions \ref{enum:duality_convex_B}-\ref{enum:duality_interior} are satisfied automatically.
Convexity of $J$~\ref{enum:duality_convex_J} follows because $R^*$ is convex so that, for $u_j=\{(f_i^{(j)}, g_i^{(j)})\}_{i=1}^n$, $j \in \{1,2\}$, and $\theta \in [0,1]$,
\begin{align*}
    &\quad J(\theta u_1 + (1-\theta)u_2) \\
    &= \sum_{i=1}^n\lambda_i \left(\iint R^*((\theta f^{(1)}_i+(1-\theta)f_i^{(2)}) \oplus \theta (g^{(1)}_i+(1-\theta)g_i^{(2)}))-c)\dd \xi_i\right. \\
                                  &\qquad \qquad - \left.\int (\theta f^{(1)}_i + (1-\theta)f_i^{(2)}) \dd\mu_i\right) \\
                                  &\le \theta\sum_{i=1}^n\lambda_i \left(\iint R^*(f^{(1)}_i\oplus g^{(1)}_i-c)\dd \xi_i - \int f^{(1)}_i\dd\mu_i\right) \\
                                  &\qquad \qquad +(1-\theta)\sum_{i=1}^n\lambda_i \left(\iint R^*(f^{(2)}_i+g^{(2)}_i-c)\dd \xi_i - \int f^{(2)}_i\dd\mu_i\right) \\
                                  &= \theta J(u_1) + (1-\theta)J(u_2).
\end{align*}

Next we show that $J$ is l.s.c. with respect to the norm topology on $V$.
Since $J$ is convex and does not take on values $\pm\infty$, by Proposition III.2.5 of~\cite{ekeland1999convex}, it is enough to show that $J$ is bounded above in a neighborhood of $0$.
Fix any $\delta > 0$.
As before we write $u = \{(f_i, g_i)\}_{i=1}^n \in V$.
Then $\norm{u} < \delta$ implies $\sup_{x\in\groundspace} \max(f_i(x), g_i(x)) < \delta$ for all $i$.
Since $\groundspace$ is compact, $\sup_{x,y\in\groundspace} c(x,y)$ is bounded.
Hence the integrand in (\ref{eqn:duality_proof_J}) is bounded for $\norm{u} < \delta$ as $R^*$ is increasing, and the conclusion that $J$ is bounded on $\{u\in V\cond \norm{u} < \delta\}$ follows from the fact that both $\xi_i$ and $\mu_i$ are probability measures for all $i$.
This proves $J$ is continuous, and hence l.s.c..

It remains to show the infimum in (\ref{eqn:duality_proof_primal}) is finite.
Note that for $u \in V$ such that $B(u) \le 0$, we have $\sum_{i=1}^n \lambda_i g_i \ge 0$.
Hence in this case, if like before we denote $\eta$ the uniform measure on $\groundspace$, then
\begin{align*}
    J(u) &= \sum_{i=1}^n\lambda_i \left(\iint R^*(f_i\oplus g_i-c)\dd \xi_i - \int f_i\dd\mu_i\right) \\
         &\ge \sum_{i=1}^n\lambda_i \left(\iint R^*(f_i\oplus g_i-c)\dd \xi_i - \int f_i\dd\mu_i\right) - \int \left(\sum_{i=1}^n \lambda_i g_i\right)\dd\eta \\ 
         &= \sum_{i=1}^n\lambda_i \left(\iint R^*(f_i\oplus g_i-c)\dd \xi_i - \int f_i\dd\mu_i - \int g_i \dd\eta\right) \\ 
         &\ge -\sum_{i=1}^n \lambda_i W_R^{\mu_i\otimes \eta}(\mu_i,\eta) \\
         &> -\infty.
\end{align*}
Thus by Proposition III.5.1 of~\cite{ekeland1999convex}, the problem (\ref{eqn:duality_proof_primal}) is stable (Definition III.2.2 in~\cite{ekeland1999convex}), and in particular normal, so we have strong duality (Proposition III.2.1, III.2.2 in~\cite{ekeland1999convex}), and the dual problem (\ref{eqn:duality_proof_dual}) has at least one solution.
We comment that this does not imply (\ref{eqn:duality_proof_primal}) has a solution.

To show the solution $\nu^*$ to (\ref{eqn:duality_proof_dual}) is actually a probability measure, suppose $\nu^*(\groundspace) \neq 1$.
Consider the inner infimum in (\ref{eqn:duality_proof_inf_summand}) for a particular $i$.
For any $t \in \RR$, we can set $f = t$ and $g = -t$. Then
\begin{align*}
    &\quad \int R^*\left(f\oplus g - c)\right)\dd \xi_i - \int f \dd\mu_i  - \int g\dd \nu^*  \\
    &=  \int R^*\left(- c)\right)\dd \xi_i + t(\nu^*(\groundspace)-\mu_i(\groundspace)) \\
                                                                                                   &\le R^*(0) + t(\nu^*(\groundspace)-\mu_i(\groundspace))
,
\end{align*}
where we used the fact that $R^*$ is increasing and $c \ge 0$.
Either sending $t \to \infty$ or $t \to -\infty$ shows that the minimizer $\nu^*$ must satisfy $\nu^*(\groundspace) = \mu_i(\groundspace) = 1$, for otherwise the infimum would be $-\infty$, which contradicts strong duality and~\ref{enum:duality_finite}.

Finally we prove the last statement of Theorem~\ref{thm:app_wb_duality}. That is, if $\{(f_i, g_i)\}_{i=1}^n$ solves~\eqref{eqn:app_wb_regularized_dual}, then each pair $(f_i,g_i)$ solves~\eqref{eqn:app_wd_regularized_dual}.
Suppose that $\{(f_i, g_i)\}_{i=1}^n$ solves~\eqref{eqn:app_wb_regularized_dual}.
Let $\nu^*$ denote the solution to~\eqref{eqn:app_wb_regularized_primal}.
Then $\sum_{i=1}^n \lambda_i g_i = 0$.
So the supremum of~\eqref{eqn:app_wb_regularized_dual} equals
\begin{align*}
    & \quad \sum_{i=1}^n \lambda_i \left( \int f_i\dd\mu_i + \int g_i\dd\nu^* - \iint R^*(f_i\oplus g_i - c)\dd\mu_i\dd\eta\right) \\
    &\le \sum_{i=1}^n\lambda_i W_R^{\mu_i\otimes \eta} (\mu_i,\nu^*), \tagaligneq \label{eqn:duality_proof_last_ineq}
\end{align*}
where the inequality follows from the duality~\eqref{eqn:app_wd_regularized_dual} of the regularized Wasserstein distance.
By strong duality we just showed, the supremum of~\eqref{eqn:app_wb_regularized_dual} equals the infimum~\eqref{eqn:app_wb_regularized_primal} which is~\eqref{eqn:duality_proof_last_ineq}.
Hence inequality in~\eqref{eqn:duality_proof_last_ineq} is equality, and we see that each pair $(f_i, g_i)$ solves~\eqref{eqn:app_wd_regularized_dual}.
\end{proof}
\section{Experimental details and additional results}
\paragraph{Multivariate Gaussians with varying dimensions.}
We generate the multivariate Gaussians in dimension $d$ used in Table~\ref{tbl:gaussian_cmp} in the following manner.
The mean is chosen uniformly at random in $[-1,1]^d$.
The covariance matrix is obtained by first sampling a matrix $A$ with uniform entries in $[-0.3,0.3]$ and then taking $AA^\top$ as the covariance matrix. We reject $A$ if its condition number (computed with respect to $2$-norm) is not in $[2,80]$.

We show in Table~\ref{tbl:gaussian_ext_cmp} additional results for our algorithm with different choices of Monge map estimation methods and regularizers; in the last column we show the result of~\cite{cuturi2014fast} where we use Sinkhorn algorithm~\cite{cuturi2013sinkhorn} instead of LP (see~Table~\ref{tbl:gaussian_cmp} for results with LP) to obtain the transport plan at every iteration.

\begin{table}[h]
\centering
\caption{Additional results for the multivariate Gaussian experiment. Reported are the covariance difference $\norm{\Sigma - \Sigma^*}_F$ where $\Sigma$ is the MLE covariance of the barycenter computed by each method, $\Sigma^*$ is the ground truth covariance, and $\norm{\cdot}_F$ is the Frobenius norm. Smaller is better. All experiments are repeated $5$ times with the mean and standard deviation reported. 
Here $R_{L_2}$ refers to quadratic regularization with $\epsilon = 10^{-4}$, and $R_e$ refers to entropic regularization with $\epsilon = 0.1$.
The regularizing $\epsilon$ is further scaled with respect to the diagonal length of the bounding box squared.
For~\cite{cuturi2014fast} with Sinkhorn algorithm, we choose $\epsilon = 0.1$.
}
\label{tbl:gaussian_ext_cmp}
\tiny
\begin{tabular}{ccccc}
\toprule
$d$ & Ours with~\ref{enum:recover_grad} and $R_{L_2}$ & Ours with~\ref{enum:recover_seguy} and $R_{L_2}$ & Ours with~\ref{enum:recover_grad} and $R_e$ & \cite{cuturi2014fast} with Sinkhorn \\
\midrule
$2$ & {\boldmath\bfseries $1.98{\times 10^{-3}}$($1.17{\times 10^{-4}}$)} & $2.38{\times 10^{-3}}$($2.48{\times 10^{-4}}$) & $8.25 \times 10^{-3}$($5.02 \times 10^{-4}$) & $5.22 \times 10^{-2}$($5.09 \times 10^{-4}$)\\
$3$ & {\boldmath\bfseries $5.05{\times 10^{-3}}$($6.32{\times 10^{-4}}$)} & $5.70{\times 10^{-3}}$($6.90{\times 10^{-4}}$) &  $8.15 \times 10^{-3}$($6.50 \times 10^{-4}$) & $7.46 \times 10^{-2}$($3.87 \times 10^{-4}$)\\
$4$ & {\boldmath$1.22{\times 10^{-2}}$($1.44{\times 10^{-3}}$)} & $1.27{\times 10^{-2}}$($1.19{\times 10^{-3}}$) &  $2.06 \times 10^{-2}$($7.40 \times 10^{-4}$) & $8.78 \times 10^{-2}$($1.40 \times 10^{-3}$)\\
$5$ & {\boldmath$1.52{\times 10^{-2}}$($1.18{\times 10^{-3}}$)} & $2.33{\times 10^{-2}}$($2.86{\times 10^{-3}}$) & $3.72 \times 10^{-2}$($9.81 \times 10^{-4}$) &  $1.00 \times 10^{-1}$($7.30 \times 10^{-4}$)\\
$6$ & {\boldmath$2.37{\times 10^{-2}}$($3.24{\times 10^{-3}}$)} & $3.27 {\times 10^{-2}}$($2.63 {\times 10^{-3}}$) & $6.13 \times 10^{-2}$($2.69 \times 10^{-3}$) &  $1.10 \times 10^{-1}$($7.93 \times 10^{-4}$)\\
$7$ & {\boldmath$4.07{\times 10^{-2}}$($2.65{\times 10^{-3}}$)} & $4.83 {\times 10^{-2}}$($2.90 {\times 10^{-3}}$) & $8.42 \times 10^{-2}$($4.62 \times 10^{-4}$) & $1.16 \times 10^{-1}$($5.44 \times 10^{-4}$) \\
$8$ & {\boldmath$4.23{\times 10^{-2}}$($3.14{\times 10^{-3}}$)} & $4.79 {\times 10^{-2}}$($2.46 {\times 10^{-3}}$) &  $1.20 \times 10^{-1}$($2.38 \times 10^{-3}$) & $1.18 \times 10^{-1}$($7.07 \times 10^{-4}$) \\
\bottomrule
\end{tabular}
\end{table}

To briefly comment on the runtime of our algorithm (with~\ref{enum:recover_grad}) and that of \cite{cuturi2014fast} and \cite{claici2018stochastic}, in the 8-dimensional Gaussian experiment from Table~\ref{tbl:gaussian_cmp}, our algorithm takes around 15 minutes, \cite{cuturi2014fast} takes 20 minutes, while \cite{claici2018stochastic} takes an hour or longer.
The simple form of Algorithm~\ref{alg:sgd} and the convex nature of~\eqref{eqn:sgd_potentials} give rise to fast convergence of our approach.

\paragraph{Subset posterior aggregation.} We adopted the \texttt{BikeTrips} dataset and preprocessing steps from \url{https://github.com/trevorcampbell/bayesian-coresets} \cite{campbell2019automated}.
The posterior samples in the subset posterior aggregation experiment are generated using NUTS sampler~\cite{hoffman2014no} implemented by the Stan library~\cite{carpenter2017stan}.
To enforce appropriate scaling of the prior in the subset posteriors we use stochastic approximation trick~\cite{srivastava2018scalable}, i.e. scaling the log-likelihood by the number of subsets. Please see the code for further details.

In Table~\ref{tbl:wasp_ext_cmp}, we show additional results comparing \cite{cuturi2014fast} and our algorithm in three different losses: difference in mean, covariance, and the (unregularized) 2-Wasserstein distance computed using $5000$ samples.
See Figure~\ref{fig:wasp_W2_vs_samples} for a comparison with varying number of samples used to compute the 2-Wasserstein distance.

\begin{table}[h]
\centering
\caption{Comparison of subset posterior aggregation results in difference in mean, covariance, and 2-Wasserstein distance. All experiments are repeated $20$ times with the mean and standard deviation reported. Variables with a superscript star ($\mu^*,\Sigma^*,\nu^*$) are quantities from the full posterior, and variables without a star are from the computed barycenter.
The mean and covariance are estimated with sufficiently many samples from the barycenter, while the 2-Wasserstein distance is computed using $5000$ samples from both the barycenter and the full posterior.}
\label{tbl:wasp_ext_cmp}
\scriptsize
\begin{tabular}{cccc}
\toprule
Loss & \cite{cuturi2014fast} & Ours with \ref{enum:recover_grad} and $R_{L_2}$ & Ours with \ref{enum:recover_seguy} and $R_{L_2}$\\
\midrule
$\norm{\mu-\mu^*}$  &  $4.79{\times 10^{-3}}$($3.19{\times 10^{-6}})$  & $4.79{\times 10^{-3}}$($5.96{\times 10^{-7}}$) & {\boldmath\bfseries$4.79{\times 10^{-3}}$($1.80{\times 10^{-7}}$)} \\
$\norm{\Sigma-\Sigma^*}$  & $2.56{\times 10^{-7}}$($2.17{\times 10^{-9}}$) &  {\boldmath\bfseries$2.43{\times 10^{-7}}$($6.57 {\times 10^{-8}}$)} & $9.51{\times 10^{-7}}$($6.62{\times 10^{-9}}$) \\
$W_2^{\mathrm{LP}}(\nu,\nu^*)$ & {\boldmath\bfseries$4.85{\times 10^{-3}}$($8.90{\times 10^{-6}}$)} & $4.86{\times 10^{-3}}$($9.40{\times 10^{-6}}$)  & $4.96{\times 10^{-3}}$($6.10{\times 10^{-6}}$) \\
\bottomrule
\end{tabular}
\end{table}
\end{appendices}
\end{document}